\newcommand{\extref}[1]{\ref*{#1} of the main paper}
\title{Non-cooperative Multi-agent Systems with Exploring Agents}
\author{Jalal Etesami, Christoph-Nikolas Straehle\\ 
Bosch Center for Artificial Intelligence, Germany}  
\DeclareMathOperator*{\softmax}{softmax}
\begin{document}

\maketitle
\begin{abstract}  
Multi-agent learning is a challenging problem in machine learning that has applications in different domains such as distributed control, robotics, and economics.
We develop a prescriptive model of multi-agent behavior using Markov games.
Since in many multi-agent systems, agents do not necessary select their optimum strategies against other agents (e.g., multi-pedestrian interaction), we focus on models in which the agents play ``exploration but near optimum strategies".
We model such policies using the Boltzmann-Gibbs distribution. This leads to a set of coupled Bellman equations that describes the behavior of the agents. We introduce a st of conditions under which the set of equations admit a unique solution and propose two algorithms that provably provide the solution in finite and infinite time horizon scenarios. 
We also study a practical setting in which the interactions can be described using the occupancy measures and propose a simplified Markov game with less complexity.
Furthermore, we establish the connection between the Markov games with exploration strategies and the principle of maximum causal entropy for multi-agent systems. 
Finally, we evaluate the performance of our algorithms via several well-known games from the literature and some games that are designed based on real world applications. 
\end{abstract}

\section{Introduction}	

A multi-agent system can be defined as a group of autonomous agents that are interacting in a common environment. 
Due to their rich ability of modeling complex dynamics, multi-agent systems are rapidly finding applications in different fields such as autonomous robotics,  telecommunications, distributed control, and economics.
Although the behavior of agents in a multi-agent system can be predefined in advance, it is often necessary that they explore new behaviors to gradually improve their performance.
Another reason that makes the a priori design of a good strategy even more difficult is that many multi-agent systems contain humans as agents. 
In such systems, modeling agents' preferences for selecting their strategies is often complex.

Stochastic games (SGs) have been used for modeling multi-agent systems. However, most of the existing works consider fully cooperative scenarios \cite{wei2016lenient} or settings in which particular communication between the agents is possible. Considering systems such as autonomous cars and their interactions with pedestrians, it is clear that some multi-agent systems are partially cooperative or even competitive, and in many situations, no communication links can be established between the agents. 
More importantly, in these systems, agents select their strategies knowing that the other agents also select their strategies with the same level of awareness.

Another important property of such systems is that the agents do not always select their best-response strategies. 
This is related to Quantal response equilibrium (QRE) which is a smoothed-out best responses, in  the  sense that agents are more likely to select better strategies than worse strategies \cite{mckelvey1995quantal}.
This idea has also its origins in statistical limited dependent variable models such as in economics \cite{mckelvey1998quantal, goeree2002quantal}, psychology, and in biology \cite{palfrey2016quantal}.
As an example, consider the grid game between two players $\{$A, B$\}$ in Figure \ref{game}. Each player can choose to stay at its current position or move to one of its adjacent neighbors. 
Players A and B want to reach their destinations at a and b, respectively and avoid collision. For simplicity, assume the players are aware of each other's goals, i.e., A knows that B wants to go to b and vice versa but they cannot communicate and they only get to play this game once. 
In real world, the pedestrians (players) may face similar situation and they can reach their goals with small chance of collision by selecting a set of near optimal strategies that consider the behavior of the other pedestrians.

This work aims to mimic the logical abilities of human, by building a method that reasons about the anticipated learning of the other agents and select an exploration but near optimum mixed strategy. 
Overall, the proposed model in this work can be categorized as a prescriptive, non-cooperative agenda according to \cite{shoham2007if}.
In our modeling, we take advantage of game theory and reinforcement learning.
\begin{figure}
\centering
\includegraphics[scale=.44]{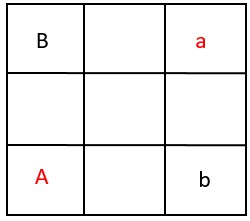}
\caption{A multi-agent system in which the goals of players A and B are to reach a and b, respectively.}\label{game}
\end{figure}
Previous works have also attempted to model similar systems but using different models such as social forces \cite{helbing1995social}, potential fields \cite{alahi2014socially}, flow fields \cite{ali2008floor}, fictitious games \cite{ma2016forecasting}, and others \cite{kretzschmar2014learning,
huang2015approximate,park2016egocentric,
jain2015car,carmel1999exploration}.  
Yet these works are performing either in static environments or in dynamic environments with short-term prediction, and they do not address the complex interactions among the agents. 

\paragraph{Contributions:} 
We propose a game-theoretic model in which all agents select mixed strategies that are distributed according to the Boltzmann distribution. In this case, the strategies of the agents depend on their Q-functions and simultaneously, the Q-functions depend on the strategies. 
\begin{itemize}
\item We introduce a set of coupled Bellman-type equations describing the Q-functions of the agents and show that under some conditions, this set of equations have a unique solution. 
In another words, we introduce a set of assumptions under which there exists a unique QRE for our Markove game.

\item Assuming that the agents are aware of each other's goals\footnote{Goals can be interpreted differently depending on the problem. In this context, we mean the reward (utility) functions.}, we propose two algorithms to obtain the solutions in finite and infinite time horizon settings.

\item We study a practical setting in which the occupancy measures of the agent can capture their interactions and propose a forward-backward algorithm to obtain their behavior with less complexity compared to the general setting.

\item We establish the connection between our model and the principle of maximum causal entropy. This result can be used to develop an algorithm in which each agent can simultaneously infer the goals of the others and its strategy.
\end{itemize}  


\section{Related Works}\label{sec:related}

In this section, we review some related multi-agent reinforcement learning (MARL) algorithms. For a more comprehensive review please see \cite{busoniu2008comprehensive,albrecht2018autonomous}. 
 
Depending on the overall goal of the agents, MARLs can be divided into cooperative or non-cooperative.
Cooperative SGs are the games in which agents have the same reward function and the learning goal is to maximize the common discounted return. The learning algorithms in such SGs are typically derived from a model-free algorithm known as Q-learning  \cite{lauer2000algorithm,
greenwald2003correlated,bowling2003multiagent,hu2003nash}.
Combination of cooperative and competitive Q-learning was developed in \cite{littman2001friend} called friend-or-foe (FOF). 
The convergence of these algorithms is based on several strong assumptions, which may not be realistic in real problems. For instance, either every stage game during learning has a Nash equilibrium or every stage game has a Nash equilibrium that is both beneficiary for the other agents and the learner does not benefit from deviating from this equilibrium \cite{bowling2000convergence}. Such requirement is satisfied in a small class of problems. 
However, our model is not necessarily cooperative and we show that if the norm of the reward functions are bounded, our model has a unique equilibrium.

The work in \cite{guestrin2002coordinated} develops an approach that simplifies the coordination among the agents when the Q-function can be decomposed additively into local functions such that each local function only depends on the actions of a subset of agents. Works in \cite{kok2005non,kok2005using} study conditions under which such decomposition of an optimal joint action can be guaranteed.
Team Q-learning algorithm is another type of learning algorithm for cooperative SGs and it is based on an assumption that the agents have unique optimal joint actions. Therefore, they are able to learn the common Q-function in parallel \cite{littman2001value}. However, this is rarely the case in real world problems.

Agent-tracking algorithms estimate models of the other agents' strategies and response to them accordingly. For example, the joint action learners in \cite{claus1998dynamics} use empirical models of the other agents' strategies. 
i.e., agent $i$'s model of $j$'s strategy is defined as
$$
\pi^i_j(a_j):={C_j^i(a_j)}/{\sum_a C_j^i(a)},
$$ 
where $\pi^i_j(a_j)$ is agent $i$'s empirical model of agent $j$'s strategy and $C_j^i(a_j)$ denotes the number of times agent $i$ observed agent $j$ taking action $a_j$. 
On the other hand, the FMG algorithm in \cite{kapetanakis2002reinforcement} keeps track of only those actions that yielded good rewards in the past. 
Similar to our work, agents in FMQ use Boltzmann action selection. However, it only works for deterministic dynamics. Moreover, FMQ increases the Q-values of only those actions that produced good rewards in the past. This enforces the agent towards coordination.

Algorithms based on fictitious game \cite{ma2016forecasting,conitzer2007awesome}, MetaStrategy algorithm \cite{powers2005new}, and Hyper-Q learning \cite{tesauro2004extending} are other related examples. 
The AWESOME algorithm in \cite{conitzer2007awesome} uses fictitious play that switches from the best response in fictitious play to a pre-computed Nash equilibrium.
The work in \cite{hernandez2017learning} directly models the distribution over opponents. However, addressing the dynamic behavior of the opponent is missing in this work.
The heuristic algorithm in \cite{ma2016forecasting} also uses the fictitious game approach. Analogous to our work, it models the policies of the agents by Boltzmann distribution. However, unlike our algorithm, during the forecast of the agents' policies, the learning algorithm in \cite{ma2016forecasting} is agnostic to the fact that each agent selects its policy also by forecasting its opponents' policies and it misses theoretical analysis.

Authors in \cite{wang2003reinforcement} propose an optimal adaptive learning for team Markov games in which each agent solves a virtual game that is constructed on top of each stage game. Hence, each agent requires empirically estimating a model of the SG, model of the other agents, and estimating an optimal value function for the SG. Under some conditions, it convergences to a coordinated optimal joint action \cite{wunder2010classes}. 
We use a similar approach in the sense that each agent solves a virtual game to find its policy but in non-stationary and non-cooperative setting and without empirical estimation of the SG.

Works that use policy search method in multi-agent setting are the alternative to the Q-learning based algorithms. Generalized IGA \cite{zinkevich2003online} and GIGA-WoLF \cite{bowling2005convergence} are two such algorithms. However, unlike the setting in this work, both are designed for two-agent and two-action games.
Deep neural network has been also used in MARL problems \cite{vinyals2019grandmaster}. 
Most deep-MARL algorithms are also developed for fully cooperative settings \cite{omidshafiei2017deep,foerster2017counterfactual} and emergent communication \cite{foerster2016learning,sukhbaatar2016learning}. The work in \cite{leibo2017multi} considers general sum settings with independent learner and \cite{lowe2017multi} proposes a centralized actor-critic architecture for efficient training in mixed environments. However, they do not reason about the other agents' behaviors. \cite{heinrich2016deep, lanctot2017unified} use the principle of best response algorithms. However, such best responses may not be desired or executed by the agents.

Analogous to the LOLA algorithm \cite{foerster2017learning} and the algorithm based on generative adversarial networks \cite{metz2016unrolled}, our algorithm makes no assumptions about cooperation among the agents and simply assumes that each agent is maximizing its own return. 
However, LOLA is developed by approximating the value function of a two-player game in which the policy of each player is parameterized with a recurrent neural network. The algorithm in \cite{metz2016unrolled} relies on an end-to-end differentiable loss function, and thus does not work in the general RL setting.

\section{Preliminaries}

\paragraph{Markov Decision Process:}
A Markov decision process (MDP) is specified by a tuple $(\mathcal{X},\mathcal{A}, P_0, P,R)$. The set of states is $\mathcal{X}$ that can be continuous or discrete but in this work we consider a discrete state space and a finite set of actions $\mathcal{A}$. 
The initial distribution $P_0$ describes the initial state $x(0)$. The transition probabilities are denoted by $P(x(t+1)|x(t), a(t))$ that is the probability of transitioning to state $x(t+1)$ after selecting action $a(t)$ at state $x(t)$. 
The agent gets $R(x,a)\in \mathbb{R}$ as a reward for selecting action $a$ at state $x$. A policy $\pi(a|x)$ is a conditional distribution that specifies how an agent selects its actions at state $x$.  Stationary policies do not depend on the time step.

The agent's goal in an infinite-time horizon setting is to maximize, at each time-step $k$, the expected discounted return
\begin{equation}\notag
\mathbb{E}\big[\sum_{\tau\geq0}\gamma^\tau R(x(\tau+k),a(\tau+k))\big],
\end{equation}
where $\gamma\in[0,1)$ is the discount factor and the expectation is taken over the probabilistic state transitions.
The Q-function, $Q^\pi : \mathcal{X}\times\mathcal{A}\rightarrow\mathbb{R}$ for a given policy $\pi$ is defined as
\begin{align*}
\small{Q^\pi(x,a)\!:=\!\mathbb{E}[\sum_{\tau\geq0}\!\gamma^\tau R(x(\tau),a(\tau)) | x_0 = x, a_0= a, \pi].}
\end{align*}
The optimal Q-function is defined as $Q^*(x,a)=\max_\pi Q^\pi(x,a)$ and it is characterized by the Bellman optimality equation: 
\begin{align}\label{eq:q_b}
&Q^*(x,a)=R(x,a)+\gamma\sum_{x'}P(x'|x,a)V^*(x'),\\ \notag
&V^*(x')=\max_a Q^*(x',a).
\end{align}
A broad spectrum of single-agent RL algorithms exists,
e.g., model-based methods based on dynamic programming \cite{bertsekas1995dynamic} and model-free methods based on online estimation of value functions.

\paragraph{Markov Game:}
A natural extension of an MDP to multi-agent environments is using Markov games \cite{thuijsman1992optimality}.
 Markov games are a special case of stochastic games (SGs), that are defined by a tuple $(\mathcal{X}_1,...,\mathcal{X}_M,\mathcal{A}_1,...,\mathcal{A}_M,P_0,P,R_1,...,R_M)$, where $M$ is the number of agents, $\mathcal{X}_i$, $\mathcal{A}_i$, and $R_i$ are the state space\footnote{It is common to represent the state of the game with a variable $\textbf{x}$, but without loss of generality, herein, we describe the state of the game using $\vec{x}$.},  action space and the reward function of agent (player) $i$, respectively. 
In this work, we assume that all agents share the same action space $\mathcal{A}$. We denote the state of agent $i$ at time $\tau$ by $x_i(\tau)$, and the state of its opponents by $\vec{x}_{-i}(\tau)$. We also denote the policy of agent $i$ at time $\tau$ by $\pi^\tau_i(a|\vec{x}):\mathcal{X}_1\times...\times\mathcal{X}_M\times\mathcal{A}\rightarrow\mathbb{R}$. Stationary policies are denoted analogously but without time superscript. 
When $R_i=R$, for all $i\in[M]:=\{1,...,M\}$, the agents have the same goal and the SG is cooperative. 

\section{Markov Game with Exploration Strategies}\label{sec:main}
\subsection{Infinite-time Horizon}
As we mentioned, most of the existing works consider scenarios in which the agents play their best strategies against their opponents' best or random strategies. However, in different settings such as human movements in a crowd, the behaviors (strategies) are not necessary optimal but rather close to optimal. 
In this work, we assume that all agents select their mixed strategies such that actions with higher Q-functions\footnote{It represents the quality of an action in a given state.} are selected with higher probabilities (Boltzmann distribution), i.e., \begin{align}\label{policyexp}
\pi_i(a_i|\vec{x})\propto{\exp\left(\beta Q_i(\vec{x},a_i)\right)}, \ \ \text{for}\  i\in[M],
\end{align}
where $0< \beta$ is known as the inverse temperature. 
This particular form of policy allows us to model the behavior of exploring agents (e.g., human) in a common environment. 
The interactions between the agents are encoded into their Q-functions.
We define the Q-function of agent $i\in[M]$ in infinite-time horizon setting as follows,
\begin{align}
&Q_i(\vec{x},a_i)=R_i(\vec{x},a_i)+\gamma\mathbb{E}_{\pi_{-i}}[V_i(\vec{x}')|\vec{x},a_i],  \label{Q_fun}
\end{align}
In the above equations, $\vec{x}$ denotes $(x_i,\vec{x}_{-i})$, and the expectation is taken over the probabilistic state transitions and the strategies of the $i$'s opponents ($-i:=[M]\setminus\{i\}$). 
The first term in \eqref{Q_fun} is the individual reward of agent $i$ that represents the immediate effect of action $a_i$ at state $\vec{x}$. The second term encodes the future effect of selected action $a_i$ considering the behavior of $i$'s opponents.  
In this equation, $V_i$ denotes the value-function,
\begin{equation}\label{eq:va-f}
V_i(\vec{x}')\!:=\!\mathbb{E}_{\pi_i}[Q_i(\vec{x}';a')]\!=\!\!\!\sum_{a'}\pi_i(a'|\vec{x}')Q_i(\vec{x}';a').\!
\end{equation}
Since in \eqref{policyexp}, the agents select their actions based on the state of the game, the joint policy of the agents can be factorized as follows 
\begin{align}\label{ind}
\small{\pi_{[M]}(\vec{a}|\vec{x})=\prod_{j\in[M]}\pi_j(a_j|\vec{x}).}
\end{align}
 It worth noting that \cite{lauer2000algorithm} defines the Q-function by assuming that the opponents of agent $i$ select their best actions, (taking maximum over the actions instead of the expectation in \eqref{Q_fun}). But in this work, analogous to \cite{ma2016forecasting}, we assume that all the agents select their actions according to the Boltzmann distribution.
\begin{remark}
An alternative definition of the value-function is to use $\softmax$\footnote{It is given by $\softmax_x f(x):=\log\sum_x \exp(f(x))$.} instead of the expectation in \eqref{eq:va-f} \cite{ziebart2010modeling,zhou2018infinite}.
The idea of using $\softmax$ is to approximate the $\max$ function in the definition of the Bellman equation \eqref{eq:q_b}. 
However, when the Q-function does not vary too much for different actions, $\softmax(Q)$ will have a bias term of order $\log|\mathcal{A}|$, where $|\mathcal{A}|$ denotes the number of actions\footnote{Consider the following vector of length $n$, $\vec{v}=[z,...,z]$, then $\softmax(\vec{v})=z+\log n$ but $\max(\vec{v})=z$.}. 
On the other hand, the value-function in \eqref{eq:va-f} which is also an approximation of $\max_{a'}Q(x',a')$ does not have this issue.
\end{remark}

Equations (\ref{policyexp})-(\ref{ind}) imply a set of $M$ coupled equations describing the relationships between the Q-functions/policies of the agents. 
More precisely, Equation \eqref{Q_fun}, can be written as 
\begin{align}\notag
&Q_i=R_i + \gamma\mathbb{E}_{Exp\{Q_{-i}\}}[\mathbb{E}_{Exp\{Q_{i}\}}[Q_i]]\\ \label{system1}
&\ \ \ \ :=T_i(Q_{-i},Q_i),\ \ \ \text{for}\ \ i\in[M].
\end{align}
In the above equations, we removed all the arguments only for simpler representation, and $Exp\{Q_i\}$ denotes the policy of $i$th player that is given in \eqref{policyexp}. 
This model describes the behavior (policy selection) of a set of interacting agents that do not always select their best responses but rather their near optimal responses.

This raises two main questions that we will address in this section: does the system of equations in \eqref{system1} admits a unique solution? If so, how can agent $i$ obtain its policy? 

Next result shows that if the reward functions are bounded, $T_i$s are contraction mappings. Therefore, the equations in \eqref{system1} admit a unique solution. The sketch of proof is provided in the Section \ref{sec:proof}.
\begin{theorem}\label{thmma}
Assume that $\max_i||R_i||_\infty\leq\frac{(1-\gamma)^2}{2\gamma M\beta}$. Then, for every $i\in [M]$, $T_i$ is a contraction mapping.  
\end{theorem}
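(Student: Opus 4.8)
The plan is to prove the stronger (and more useful) statement that the \emph{stacked} operator $T=(T_1,\dots,T_M)$ is a contraction on the complete metric space $\mathcal{B}:=\{\vec{Q}=(Q_1,\dots,Q_M):\max_i\|Q_i\|_\infty\le B\}$ equipped with the max-norm $\|\vec{Q}\|_{\max}:=\max_i\|Q_i\|_\infty$, where $B:=\max_i\|R_i\|_\infty/(1-\gamma)$; establishing that each $T_i$ is Lipschitz in its joint argument $(Q_{-i},Q_i)$ with constant $(1+\gamma)/2<1$ immediately yields this. First I would check that $T$ maps $\mathcal{B}$ into itself: since $V_i(\vec{x}')$ is a convex combination of the entries $Q_i(\vec{x}',\cdot)$ by \eqref{eq:va-f}, we have $\|V_i\|_\infty\le\|Q_i\|_\infty$, so $\|T_i(Q_{-i},Q_i)\|_\infty\le\|R_i\|_\infty+\gamma B\le B$. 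This uniform bound $B$ is what keeps the nonlinear Lipschitz constants finite, and it is here that the hypothesis on $\max_i\|R_i\|_\infty$ first enters.

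For the Lipschitz estimate I would write $T_i(Q_{-i},Q_i)(\vec{x},a_i)=R_i(\vec{x},a_i)+\gamma\sum_{\vec{a}_{-i},\vec{x}'}\pi_{-i}(\vec{a}_{-i}|\vec{x})\,P(\vec{x}'|\vec{x},a_i,\vec{a}_{-i})\,V_i(\vec{x}')$, note that the reward term cancels in the difference $T_i(Q_{-i},Q_i)-T_i(Q'_{-i},Q'_i)$, and split it by adding and subtracting $\pi_{-i}PV_i'$ into two parts: a part (A) holding $\pi_{-i}$ fixed and varying the value $V_i\to V_i'$, and a part (B) varying $\pi_{-i}\to\pi'_{-i}$ against the fixed value $V_i'$. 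Since $\pi_{-i}P$ is a probability distribution, part (A) is bounded by $\gamma\|V_i-V_i'\|_\infty$, and since $|V_i'|\le B$, part (B) is bounded by $\gamma\|\pi_{-i}-\pi'_{-i}\|_1\,B$.

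The two remaining nonlinear pieces are the heart of the argument. For the Boltzmann value map $f:Q_i(\vec{x}',\cdot)\mapsto V_i(\vec{x}')$, differentiating along a direction $v$ with $\|v\|_\infty\le1$ and using $\partial\pi_i(a)/\partial Q_i(c)=\beta\pi_i(a)(\delta_{ac}-\pi_i(c))$ gives the clean identity $df/dt=\beta\,\mathrm{Cov}_{\pi_i}(v,Q_i)+\mathbb{E}_{\pi_i}[v]$; Cauchy--Schwarz together with Popoviciu's variance bound $\mathrm{Var}_{\pi_i}(Q_i)\le((\max Q_i-\min Q_i)/2)^2\le B^2$ yields $\|V_i-V_i'\|_\infty\le(1+\beta B)\|Q_i-Q_i'\|_\infty$. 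The same derivative identity bounds the single-agent softmax by $\|\pi_j-\pi'_j\|_1\le\beta\|Q_j-Q'_j\|_\infty$, and because $\pi_{-i}=\prod_{j\ne i}\pi_j$ is a product of distributions, a telescoping argument gives $\|\pi_{-i}-\pi'_{-i}\|_1\le\sum_{j\ne i}\|\pi_j-\pi'_j\|_1\le(M-1)\beta\max_{j\ne i}\|Q_j-Q'_j\|_\infty$. Combining, with $\Delta:=\max_j\|Q_j-Q'_j\|_\infty$,
\[
\|T_i(Q_{-i},Q_i)-T_i(Q'_{-i},Q'_i)\|_\infty\le\gamma(1+\beta B)\Delta+\gamma(M-1)\beta B\,\Delta\le\gamma(1+M\beta B)\Delta.
\]
Plugging in $\max_i\|R_i\|_\infty\le\frac{(1-\gamma)^2}{2\gamma M\beta}$, i.e. $B\le\frac{1-\gamma}{2\gamma M\beta}$, gives $M\beta B\le\frac{1-\gamma}{2\gamma}$ and hence the contraction factor $\gamma(1+M\beta B)\le\frac{1+\gamma}{2}<1$, as claimed.

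I expect the main obstacle to be the two nonlinear Lipschitz estimates, and in particular getting a clean constant for the Boltzmann value map $f$, which depends on $Q_i$ both through the reweighting $\pi_i$ and directly through the averaged $Q_i$ values; the covariance form of $df/dt$ is the key simplification, and controlling the covariance and variance by $B$ is exactly where the uniform bound on the Q-functions (and thus the reward hypothesis) is indispensable. A secondary subtlety is the telescoping bound on the product of softmaxes, which is what introduces the factor $M$ and explains why the admissible reward magnitude must scale like $1/(M\beta)$; the factor $2$ in the hypothesis is precisely the slack that converts the threshold into the strict contraction constant $(1+\gamma)/2$.
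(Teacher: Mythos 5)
Your proposal is correct and follows the same high-level strategy as the paper's proof: a triangle-inequality decomposition of $T_i(Q_{-i},Q_i)-T_i(Q'_{-i},Q'_i)$ into a piece where the opponents' Boltzmann policies vary against a fixed value function and a piece where the value function varies, a total-variation bound $|\mathbb{E}_u[f]-\mathbb{E}_v[f]|\le\|f\|_\infty\|u-v\|_1$ for the first piece, and a sup-to-$\ell_1$ Lipschitz bound for the Boltzmann map for both. The differences are in the technical execution, and in each case your route is cleaner. Where you establish $\|\pi_j-\pi'_j\|_1\le\beta\|Q_j-Q'_j\|_\infty$ and $\|V_i-V_i'\|_\infty\le(1+\beta B)\|Q_i-Q'_i\|_\infty$ via the derivative identity, covariance form, Cauchy--Schwarz and Popoviciu, the paper proves the softmax Lipschitz property (its Lemma 3) by reduction to an inequality on exponentials that is handled by a fairly involved induction on the dimension (Lemma 2), yielding the weaker constant $2\alpha\beta$ with an unspecified $\alpha<1$; your constants are tighter and fully explicit, which is why you land exactly on the contraction factor $(1+\gamma)/2$ under the stated hypothesis. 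For the joint opponents' policy, you telescope the product $\pi_{-i}=\prod_{j\ne i}\pi_j$ to get the factor $M-1$, whereas the paper treats $Exp_\beta\{Q_{-i}\}$ as a single softmax of the direct sum $\bigoplus_{j\in-i}Q_j$ and bounds $\|Q_{-i}-\tilde Q_{-i}\|_\infty\le(M-1)\max_j\|Q_j-\tilde Q_j\|_\infty$; the two are equivalent here since the product of Boltzmann distributions is the Boltzmann distribution of the sum. Finally, your step of verifying that $T$ maps the ball $\{\max_i\|Q_i\|_\infty\le B\}$ with $B=\max_i\|R_i\|_\infty/(1-\gamma)$ into itself makes rigorous the bound $\|V_i\|_\infty\le B$, which the paper asserts somewhat informally via the geometric series $B_i(1+\gamma+\cdots)$ without restricting the domain; this invariance argument is worth keeping, as the Lipschitz constants of the nonlinear maps genuinely depend on a uniform bound for the iterates and not only for the fixed point.
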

The assumption of Theorem \ref{thmma} is not restrictive as one can ensure it is satisfied by simply scaling the reward functions. However, this may not be the case in some scenarios. Later, in the Section \ref{sec:exp}, we introduce an alternative that relaxes this assumption to incorporate rewards with higher norms.
Since $\{T_i\}$ are contraction mappings, one can find the solution of \eqref{system1} using a value iteration algorithm. Here, we present MGE-I in Algorithm \ref{alg:1} that can $\epsilon$-approximate the solution of \eqref{system1}.
 \begin{algorithm}
\caption{MGE-I}\label{alg:1} 
\begin{algorithmic}
\STATE $\{R_{j}\}, \epsilon$
\STATE \textbf{Initialize:} $\vec{Q}^0, \vec{Q}^1$, $s=0$
\WHILE{$\max_j||Q_j^{s+1}-Q_j^s||\geq\epsilon$}
\FOR {$j\in-i$}
\STATE $Q_j^{s+1}\leftarrow T_j(Q_{-j}^{s},Q_j^{s})$
\ENDFOR
\STATE $Q_i^{s+1}\leftarrow T_i(Q_{-i}^{s+1},Q_i^{s})$
\STATE $s\leftarrow s+1$
\ENDWHILE
\end{algorithmic}
\end{algorithm}

\subsection{Finite-time horizon}
In the finite-horizon setting, the problem is slightly different. This is because of non-stationary policies and no discount factor $\gamma$ in the definition of returns. 
More precisely, in our model, the Q-function of the $i$th agent at time $\tau\in[0,T]$ is defined by
\begin{align}\label{finite-t}
&Q^{\tau}_i( \vec{x},{a}_i):=R_i( \vec{x}, {a}_i) + \mathbb{E}_{\pi^\tau_{-i}}[ V^{{\tau}+1}_i( \vec{x}')|\vec{x}, {a}_i],\\ \notag
&V^{\tau+1}_i( \vec{x}')=\mathbb{E}_{\pi_i^{\tau+1}}[Q^{\tau+1}_i(\vec{x}', {a}')],
\end{align}
with the boundary condition $V_i^{T}(\vec{x})=R_{i,F}(\vec{x})$ that is the final reward of agent $i$.
Analogous to the infinite-time horizon setup, the expectation in \eqref{finite-t} depends on the policy of the other agents, which we assume that are all distributed according to the Boltzmann distribution.
Thus, Equation (\ref{finite-t}) can be written as
\begin{align}\label{er0}
Q^{\tau}_i=U_i(Q^{\tau}_{-i},V^{\tau+1}_i),\ \ \text{for}\ i\in[M],
\end{align}
where $U_i$ denotes the right hand side of Equation (\ref{finite-t}). 
Similar to the result of infinite-time horizon section, we have the following result.

\begin{theorem}\label{thmm2}
If $\max_i\{||R_i||_\infty,||R_{i,F}||_\infty\}\leq 1/2\beta(M-1)(1+T)$, then $U_i$ is a contraction mapping with respect to its first argument.
\end{theorem}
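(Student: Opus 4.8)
The plan is to show directly that, for a fixed second argument $V^{\tau+1}_i$, the map $U_i$ is Lipschitz in $Q^\tau_{-i}$ with modulus strictly below one, exactly mirroring the mechanism behind Theorem \ref{thmma} but with the geometric discount series replaced by a finite sum over the horizon. First I would note that in \eqref{finite-t} the first argument $Q^\tau_{-i}$ enters \emph{only} through the opponents' Boltzmann policies $\pi^\tau_{-i}$ inside the expectation, so when comparing $U_i(Q^\tau_{-i},V^{\tau+1}_i)$ with $U_i(\tilde Q^\tau_{-i},V^{\tau+1}_i)$ the reward term $R_i$ cancels and the entire difference is the gap between two expectations of the \emph{same} function $V^{\tau+1}_i$ taken under two different opponent joint policies. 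I would bound this gap by
\begin{equation}\notag
\big\|U_i(Q^\tau_{-i},V^{\tau+1}_i)-U_i(\tilde Q^\tau_{-i},V^{\tau+1}_i)\big\|_\infty \le \|V^{\tau+1}_i\|_\infty\cdot\big\|\pi^\tau_{-i}(\cdot|\vec{x})-\tilde\pi^\tau_{-i}(\cdot|\vec{x})\big\|_1,
\end{equation}
i.e. the sup-norm of the transported value times the total-variation distance between the two opponent policies.

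Two estimates then drive the result. For the first factor I would bound $\|V^{\tau+1}_i\|_\infty$ by backward induction on \eqref{finite-t}: since there is no discount, $\|V^\tau_i\|_\infty\le\|R_i\|_\infty+\|V^{\tau+1}_i\|_\infty$ with boundary value $\|V^T_i\|_\infty=\|R_{i,F}\|_\infty$, which unrolls to $\|V^{\tau+1}_i\|_\infty\le\|R_{i,F}\|_\infty+T\|R_i\|_\infty\le(1+T)\max_i\{\|R_i\|_\infty,\|R_{i,F}\|_\infty\}$. This is precisely where the factor $(1+T)$ in the hypothesis appears, playing the role that $\tfrac{1}{1-\gamma}$ plays in the infinite-horizon case. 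For the second factor I would use the factorization \eqref{ind}, $\pi^\tau_{-i}=\prod_{j\neq i}\pi^\tau_j$, together with the standard hybrid (telescoping) bound for product distributions, $\|\prod_j p_j-\prod_j q_j\|_1\le\sum_j\|p_j-q_j\|_1$, reducing matters to the per-agent softmax sensitivity. For each opponent I would establish the Lipschitz estimate $\|\pi^\tau_j-\tilde\pi^\tau_j\|_1\le\beta\|Q^\tau_j-\tilde Q^\tau_j\|_\infty$, so that summing over the $M-1$ opponents yields $\|\pi^\tau_{-i}-\tilde\pi^\tau_{-i}\|_1\le\beta(M-1)\max_{j\neq i}\|Q^\tau_j-\tilde Q^\tau_j\|_\infty$.

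Combining the two estimates gives a contraction modulus of at most $\|V^{\tau+1}_i\|_\infty\,\beta(M-1)\le(1+T)\,\beta(M-1)\cdot\max_i\{\|R_i\|_\infty,\|R_{i,F}\|_\infty\}$, which under the hypothesis $\max_i\{\|R_i\|_\infty,\|R_{i,F}\|_\infty\}\le\tfrac{1}{2\beta(M-1)(1+T)}$ is at most $\tfrac12<1$, establishing the contraction. The step I expect to be the main obstacle is the softmax Lipschitz bound, since the naive operator-norm estimate for the softmax Jacobian $\partial\pi_a/\partial Q_b=\beta(\pi_a\delta_{ab}-\pi_a\pi_b)$ loses a factor of two and would only give modulus $1$, failing to contract. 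The plan is to integrate the Jacobian along the segment between $\beta Q^\tau_j$ and $\beta\tilde Q^\tau_j$ and recognize the resulting inner sum as a $\pi$-weighted mean absolute deviation of the logit differences; bounding that deviation by half the range (attained at a two-point distribution) recovers the sharp constant $\beta$ rather than $2\beta$, which is exactly what is needed to reach $\tfrac12$. Everything else is a routine adaptation of the argument for Theorem \ref{thmma}, the only genuinely new ingredient being the linear-in-$T$ value bound in place of the discounted geometric series.
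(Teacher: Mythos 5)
Your proposal is correct and follows the same skeleton as the paper's argument: the difference $U_i(Q^\tau_{-i},V^{\tau+1}_i)-U_i(\tilde Q^\tau_{-i},V^{\tau+1}_i)$ is a gap between two expectations of the same value function, bounded by $\|V^{\tau+1}_i\|_\infty$ times the total-variation distance between the opponent policies (the paper's Lemma \ref{lemma4}), combined with the linear-in-$T$ bound $\|V^{\tau+1}_i\|_\infty\le (T+1)\max_i\{\|R_i\|_\infty,\|R_{i,F}\|_\infty\}$ in place of the geometric series. Where you genuinely diverge is in the two remaining ingredients. First, for the softmax sensitivity the paper invokes Lemma \ref{lemma0}, which asserts $\|Exp_\beta(Q)-Exp_\beta(\tilde Q)\|_1\le 2\alpha\beta\|Q-\tilde Q\|_\infty$ for some unspecified $\alpha<1$ and is proved by a fairly involved induction (Lemma \ref{lemma2}); you instead integrate the softmax Jacobian and bound the resulting $\pi$-weighted mean absolute deviation of the logit increments by half their range, obtaining the sharp constant $\beta$ (i.e.\ $\alpha=1/2$ uniformly). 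This is cleaner, gives an explicit contraction modulus of $1/2$ rather than a modulus that is only qualitatively below one, and sidesteps any worry about whether the paper's $\alpha$ is uniform over $Q$. Second, to aggregate over opponents the paper writes the joint policy as $Exp_\beta(\bigoplus_{j\in -i}Q_j)$ and applies its softmax lemma once, using $\|Q_{-i}-\tilde Q_{-i}\|_\infty\le (M-1)\max_j\|Q_j-\tilde Q_j\|_\infty$, whereas you use the telescoping bound $\|\prod_j p_j-\prod_j q_j\|_1\le\sum_j\|p_j-q_j\|_1$ and apply the per-agent lemma $M-1$ times; both routes produce the same $(M-1)$ factor and are equally valid. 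Your diagnosis that the naive Jacobian operator-norm bound would give $2\beta$ and fail to contract is exactly the issue the paper's Lemma \ref{lemma2} is designed to address, so identifying and resolving it with the mean-absolute-deviation argument is the right move.
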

This result shows that the system of equations in \eqref{er0} admits a unique set of solution $\{\vec{Q}^0,...,\vec{Q}^{T}\}$ and Algorithm \ref{alg:2} is able to $\epsilon$-approximate the solutions. 
\begin{algorithm}
\caption{MGE-F}\label{alg:2}
\begin{algorithmic}[1]
\STATE $\{R_j,R_{j,F}\}, T, \epsilon$
\FOR {$j=1,...,M$}
\STATE $V_j^{T}\leftarrow R_{j,F}$
\ENDFOR 
\STATE \textbf{Initialize:} $\vec{Q}^0, \vec{Q}^1$
\FOR{$\kappa=T-1,...,0$}
\STATE $s=0$
\WHILE{$\max_j||Q_j^{s+1}-Q_j^s||\geq\epsilon$}
\FOR {$i=1,...,M$}
\STATE $Q_i^{s+1}\leftarrow U_i(Q_{-i}^{s},V_i^{\kappa+1})$
\ENDFOR
\STATE $s\leftarrow s+1$
\ENDWHILE
\STATE {$\vec{Q}^0\leftarrow \vec{Q}^{s}$}
\FOR {$j=1,...,M$}
\STATE{$\widehat{Q}_j^{\kappa}\leftarrow U_j(Q^{0}_{-j},V^{\kappa+1}_j)$}
\STATE {$V^{\kappa}_j\leftarrow\mathbb{E}_{\pi_j^{\kappa}}[\widehat{Q}_j^{\kappa}]$}
\ENDFOR
\ENDFOR
\end{algorithmic}
\end{algorithm}

\section{Special Setting}
In many applications, the interactions among the agents can be summarized to not-colliding or colliding (not merging/merging into a same state) in a common environment. Example of a not colliding scenario is a set of self-driving cars in a highway and an example of colliding scenario is two robots exchanging their loads in a warehouse.
The need for such model is to reduce the complexity of the inference algorithms in the previous section.
In this section, we introduce a simplified Markov game that can model such scenarios. 
This game has lower complexity and requires less memory compared to the games in Section \ref{sec:main}. For the rest of this section, we assume that all agents have the same state space $\mathcal{X}$. 

The main idea of this model is to assume that the reward function of agent $i$ in \eqref{finite-t} can be factorized into two terms: the individual goal of the agent that unlike $R_i(\vec{x},a_i)$ in \eqref{finite-t} depends only on $(x_i, a_i)$ and another term that encodes the interactions between $i$ and its opponents. The latter term should depends only on the existence probability of $i$'s opponents at state $x_i$. 
Therefore, we define the \textit{occupancy measure} $O_j^\tau(x_i):\mathcal{X}\rightarrow[0,1]$ to denote the likelihood of agent $j$ being at state $x_i$ at time $\tau$.

With this assumption, we redefine the Q-function of agent $i$ at time $\tau$, which now, it only depends on $i$'s state and action.
 More precisely, we define
\begin{align}\label{Q_fun_a}
&\widetilde{Q}^\tau_i(x_i,a_i):=R_i({x}_i,a_i)\!+\!\Psi\big(O_{-i}^{\tau}(x_i)\big)\!+\!\mathbb{E}[\widetilde{V}^{\tau+1}_i(x)|{x}_i,a_i],  
\end{align}
where $O_{-i}^\tau(x_i):=\{O_1^\tau(x_i),...,O_M^\tau(x_i)\}\setminus\{O_i^\tau(x_i)\}$,
and $\Psi(\cdot)$ is a functional\footnote{As an example, in the self-driving car scenario, a possible choice for $\Psi(O_{-i}^\tau(x_i))$ is $-\sum_{j\in-i}\mu_jO_j^\tau(x_i)$, where $\mu_j>0$.}. 
The expectation is taken over the probabilistic state transitions.  
The value-function is defined similar to \eqref{finite-t}, 
\begin{equation}\label{eq:va-f_a}
\widetilde{V}^{\tau+1}_i(x):=\mathbb{E}_{\widetilde{\pi}^{\tau+1}_i}[\widetilde{Q}^{\tau+1}_i(x;a)],
\end{equation}
and $\widetilde{V}_i^T(x)=R_{i,F}(x)$, where $R_{i,F}(x)$ denotes the final reward of agent $i$ at state $x$. The policy $\widetilde{\pi}^{\tau}_i=Exp\{\widetilde{Q}_i^{\tau}\}$ is defined similar to \eqref{policyexp}.
The above equations lead to  
\begin{align}\notag
&\widetilde{Q}^\tau_i= R_i+\!\Psi(O^\tau_{-i})\!+\mathbb{E}_{Exp\{\widetilde{Q}_i^{\tau+1}\}}[\widetilde{Q}_i^{\tau+1}]\\  \label{backw}
&:=B_i(O_{-i}^\tau, \widetilde{Q}_i^{\tau+1}), \ \text{for}\ \ i\in[M],
\end{align}
This equation describes the dependency  between the Q-function and the occupancy measures.
On the other hand, the occupancy measure can be written recursively as follows
\begin{align}\notag
&O_i^{\tau+1}(x)=\!\!\sum_{x',a} \widetilde{\pi}_i^{\tau}(a|x')P(x|x',a)O_i^{\tau}(x')=\! \mathbb{E}_{Exp\{\widetilde{Q}_i^{\tau}\}}[O_i^{\tau}]\\ 
\label{forw}
&:=G_i(O_{i}^{\tau}, \widetilde{Q}_i^{\tau}),\ \ \text{for}\  i\in[M],
\end{align}
\begin{figure*}
\centering
\includegraphics[scale=.4]{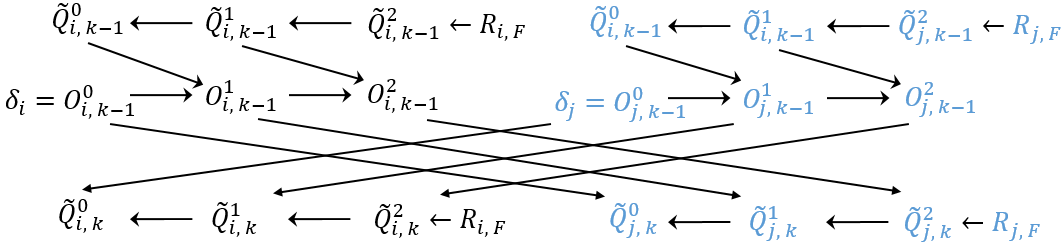}
\hspace{.4cm}
\includegraphics[scale=.32]{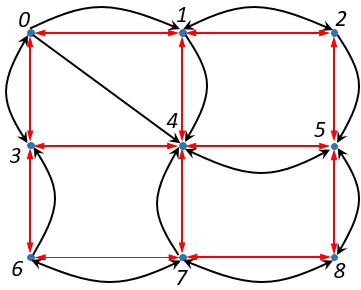}
\caption{Left: Graphical model demonstrating the dependencies between variables in algorithm \ref{alg:fb} with two agents $\{i, j\}$ and $T=2$. Here, $k$ denotes the $k$th iteration of the algorithm. Right: State space and possible actions of the pursuit game. Red and black arrows including self-loop for every node indicate the possible actions of the hunter and the prey, respectively.}\label{fig:fb}
\end{figure*}
and the boundary condition $O^0_i=\delta_i$, where $\delta_i$ denotes the delta function that is zero everywhere except at the current state of agent $i$. 
By assuming that agents are aware of each others goals (reward functions $\{R_j, R_{j,F}\}_{j\in[M]}$ are known to all the agents), we propose a forward-backward algorithm in \ref{alg:fb} that outputs the Q-functions of the agents.
In the forward-pass of Algorithm \ref{alg:fb}, the occupancy measures $\{O^\tau_i\}$ are updated using \eqref{forw} and afterward, they are used in the backward-pass to update the Q-functions via \eqref{backw}.
Figure \ref{fig:fb} demonstrates the dependencies between the variables of a two-players system with $T=2$.
\begin{algorithm}[H]
\caption{MGE-FB}\label{alg:fb}
\begin{algorithmic}[1]
\STATE $\{R_j,R_{j,F}\}, T, K$
\STATE \textbf{Initialize:} $\{\widetilde{Q}_j^0\},...,\{\widetilde{Q}_j^T\}$
\FOR {$j=1,...,M$}
\STATE $O_j^{0}\leftarrow \delta_j, \ \ \widetilde{Q}_j^{T+1}\leftarrow R_{j,F}$
\ENDFOR
\FOR{$k=1,...,K$}
\STATE {\textit{Forward-pass:}}
\FOR {$\tau=1,...,T$ and $j=1,...,M$}
\STATE $O_{j}^{\tau}\leftarrow G_j(O_{j}^{\tau-1},\widetilde{Q}_{j}^{\tau-1})$
\ENDFOR
\STATE{\textit{Backward-pass:}}
\FOR {$\tau=T-1,...,0$ and $i=1,...,M$}
\STATE $\widetilde{Q}_{i}^{\tau}\leftarrow B_i(O_{-i}^{\tau},\widetilde{Q}_{i}^{\tau+1})$
\ENDFOR
\ENDFOR
\end{algorithmic}
\end{algorithm}
Next result introduces a condition under which Algorithm \ref{alg:fb} converges to a set of unique Q-functions as the number of iterations $K$ tends to infinity. To establish this result, we assume that there is a constant\footnote{ For the choice of $\Psi(O_{-i}^\tau(x_i))=-\sum_{j\in-i}\mu_jO_j^\tau(x_i)$ and when the state space $\mathcal{X}$ is discrete, we have $L=\sum_j\mu_j$.} $L$ such that for any pairs of occupancy measures $O, \overline{O}:\mathcal{X}\rightarrow[0,1]$ and $\forall\ i\in[M]$,
\begin{align*}
& ||\Psi(O_{-i})-\Psi(\overline{O}_{-i})||_\infty \leq L \max_{j\in-i}||O_j-\overline{O}_j||_1.
\end{align*}
Furthermore, we assume that there exist constants $\omega$ and $\varphi$ such that $\max_i\{||R_i||_\infty,||R_{i,F}||_\infty\}\leq \omega$ and $||\Psi||_\infty\leq \varphi$ and let $\xi:=(T+1)(\omega+\varphi)$.
\begin{theorem}\label{them_conv}
Under the above assumptions, Algorithm \ref{alg:fb} converges as $K$ tends to infinity when $2LT\leq \xi\exp(-\beta(T+1)\xi)$. 
\end{theorem}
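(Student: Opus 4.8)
The plan is to regard one outer iteration $k\mapsto k+1$ of Algorithm \ref{alg:fb} as a single map $\Phi$ acting on the collection of Q-functions $\mathbf{Q}=\{\widetilde{Q}_i^\tau\}_{i\in[M],\,0\le\tau\le T}$, namely $\Phi=(\text{backward pass})\circ(\text{forward pass})$, since the boundary data $O_i^0=\delta_i$ and $\widetilde{Q}_i^{T+1}=R_{i,F}$ stay fixed throughout. Convergence as $K\to\infty$ then follows from the Banach fixed-point theorem once $\Phi$ is shown to be a contraction, in the sup-norm $\max_{i,\tau}\|\cdot\|_\infty$, on a complete invariant set. First I would exhibit such a set: a backward induction on \eqref{backw}, using $\|R_i\|_\infty\le\omega$, $\|\Psi\|_\infty\le\varphi$, and the fact that the soft value $\mathbb{E}_{Exp\{\widetilde{Q}_i^{\tau+1}\}}[\widetilde{Q}_i^{\tau+1}]$ is a convex combination of the entries of $\widetilde{Q}_i^{\tau+1}$, yields $\|\widetilde{Q}_i^\tau\|_\infty\le\xi$ for every $i,\tau$. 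Hence the closed ball $\mathcal{B}=\{\mathbf{Q}:\max_{i,\tau}\|\widetilde{Q}_i^\tau\|_\infty\le\xi\}$ is invariant under $\Phi$ and complete, so it suffices to estimate the Lipschitz constant of $\Phi$ on $\mathcal{B}$.

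The estimate factors through two sensitivity bounds. For the forward pass, each $O_j^\tau$ is a probability vector ($\|O_j^\tau\|_1=1$) obtained from $O_j^0=\delta_j$ by the Markov recursion \eqref{forw}, whose stochastic kernel is built from the Boltzmann policy $Exp\{\widetilde{Q}_j^s\}$. Using that the softmax is Lipschitz in its logits, $\|Exp\{Q\}-Exp\{\bar{Q}\}\|_1\le\beta\|Q-\bar{Q}\|_\infty$, together with the $\ell_1$-contractivity of stochastic maps and a telescoping argument, I would bound $\|O_j^\tau-\bar{O}_j^\tau\|_1\le\beta\sum_{s<\tau}\|\widetilde{Q}_j^s-\bar{\widetilde{Q}}_j^s\|_\infty$, so the forward map is Lipschitz from $\mathbf{Q}$ to the occupancy measures with constant of order $\beta T$.

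For the backward pass the dependence on the opponents enters only through $\Psi(O_{-i}^\tau)$, which by assumption is $L$-Lipschitz in $\max_{j}\|O_j^\tau-\bar{O}_j^\tau\|_1$; moreover the soft-value map $Q\mapsto\mathbb{E}_{Exp\{Q\}}[Q]$ has Lipschitz constant $1+\beta\xi$ on $\mathcal{B}$ (split the difference into an expectation of $Q-\bar{Q}$, bounded by $\|Q-\bar{Q}\|_\infty$, plus a policy difference weighted by $\|\bar{Q}\|_\infty\le\xi$). Unrolling \eqref{backw} from $\tau=T$ down to $0$ with $\widetilde{Q}_i^{T+1}=R_{i,F}$ fixed produces a geometric accumulation $\sum_{\tau=0}^{T}(1+\beta\xi)^\tau\le\exp(\beta(T+1)\xi)/(\beta\xi)$, so the backward map is Lipschitz from the occupancy measures to $\mathbf{Q}$ with constant at most $L\exp(\beta(T+1)\xi)/(\beta\xi)$. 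Composing the two bounds gives a Lipschitz constant for $\Phi$ of order $(LT/\xi)\exp(\beta(T+1)\xi)$, which is strictly below $1$ exactly under the hypothesis $2LT\le\xi\exp(-\beta(T+1)\xi)$; Banach's theorem then delivers a unique fixed point and convergence of the iterates.

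The technical crux is the backward step: the soft-value Lipschitz constant $1+\beta\xi$ compounds over the $T+1$ stages, and it is precisely this compounding that produces the exponential factor $\exp(\beta(T+1)\xi)$ in the threshold. Getting the constants to line up with the stated bound, including the factor $2$ and the cancellation of $\beta\xi$ between the forward and backward estimates, requires care in choosing the softmax Lipschitz constant and in bounding the geometric sum; I also need to verify at every step that the occupancy iterates remain genuine distributions, so that the $\ell_1$-contractivity and the $\|\bar{Q}\|_\infty\le\xi$ weighting can be invoked.
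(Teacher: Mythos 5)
Your proposal is correct and follows essentially the same route as the paper's proof: a forward sensitivity bound of order $\beta T$ for the occupancy measures, a backward recursion with soft-value Lipschitz constant $1+\beta\xi$ accumulating geometrically to $e^{\beta(T+1)\xi}/(\beta\xi)$, and their composition yielding the contraction factor $\tfrac{2LT}{\xi}e^{\beta(T+1)\xi}$ (the paper phrases the contraction via coupled recursions on $\Delta_k^t$ and $\Lambda_k^t$ and a Cauchy-sequence lemma rather than Banach's theorem, but this is the same argument). Your explicit verification that the iterates stay in the ball $\|\widetilde{Q}_i^\tau\|_\infty\le\xi$ is a point the paper leaves implicit and is a worthwhile addition.
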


\section{Multi-agent Maximum Causal Entropy}\label{sec:learning-individual}
In the above formulation of the Markov games, it is assumed that the agents are aware of all the reward functions. However, this may not be the case in some problems. 
In order to relax this assumption, we establish the connection between Markov games with exploration strategies and the principle of maximum causal entropy (MCE). Then, we show how this connection can be used to relax the aforementioned assumption.

The principle of MCE prescribes a policy by maximizing the entropy of a sequence of actions causally conditioned on sequentially revealed side information  \cite{ziebart2010modeling}. 
The original formulation of MCE is for single-agent setting but the multi-agent extension of MCE is also introduced in \cite{ziebart2011maximum}.
The problem of MCE in multi-agent setting is as follows,
\begin{equation}\label{irl_max}
\begin{aligned}
&\max_{\{\pi^\tau_i\}} \small{H(\vec{a}||\vec{x}):=\small{-\mathbb{E}_{\vec{a},\vec{x}}\Big[\sum_{\tau\leq T}\log \pi^\tau(\vec{a}(t)|\vec{x}(t))\Big]},}\\
&\text{s.t.}\ \  \mathbb{E}_{a_i,\vec{x}}[F_i(\vec{x},a_i)]=\widehat{\mathbb{E}}_{a_i,\vec{x}}[F_i(\vec{x},a_i)],\ \ \forall i, \\ 
&\ \ \sum_{a_i(\tau)\in \mathcal{A}}\!\!\pi^\tau_i(a_i(\tau)|\vec{x}(\tau))\!=\!1,\ \ \ \ \forall\ \tau, i, \vec{x}(\tau), \\
&\ \ \ \ \pi^\tau_i(a_i(\tau)|\vec{x}(\tau))\geq0,\ \ \ \ \forall\ \tau, i, a_i(\tau), \vec{x}(\tau),
\end{aligned}
\end{equation}
The first constraint is to ensure that for any agent $i$, the expectation of the feature function $F_i=(F_{i,1},...,F_{i,N_i})$ matches its empirical mean $\widehat{\mathbb{E}}_{a_i,\vec{x}}[F_i(\vec{x},a_i)]$.
The feature functions are revealed as side information to all agents.
Next results describe the solution of \eqref{irl_max}.

\begin{theorem}\label{thm:G}
Solution of \eqref{irl_max} is recursively given by
\begin{align*}
&\pi_{i}^\tau\!=\!\frac{1}{Z_i(\tau)}\exp\Big(W^\tau_i(\vec{x}(\tau),a_i(\tau))\Big),\ \\ 
&W^\tau_i(\vec{x}(\tau)\!,a_i(\tau))=<\theta_i, F_i(\vec{x}(\tau),a_i(\tau))>+\mathbb{E}_{\pi_{-i}^\tau}[\log Z_i(\tau\!+\!1)],\\
&\log Z_i(\tau)=\softmax_{a'\in\mathcal{A}}W^\tau_i(\vec{x}(\tau),a'),\\
& \log Z_i(T)=\softmax_{a\in\mathcal{A}} <\theta_i, F_i(\vec{x}(T),a)>,
\end{align*}
where $\pi_{-i}^\tau=\prod_{j\in-i}\pi_j^\tau$.
The boundary condition is $Z_i(T+1)=1$ for all the agents.
\end{theorem}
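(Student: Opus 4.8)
The plan is to solve the constrained program \eqref{irl_max} by Lagrangian duality combined with a backward induction in time, treating the log-partition functions $\log Z_i(\tau)$ as soft value functions. Since the causal entropy $H(\vec{a}\|\vec{x})$ is concave in the collection of policies $\{\pi^\tau_i\}$ and every constraint (feature matching, normalization, non-negativity) is linear, any point satisfying the KKT stationarity conditions is a global maximizer; the non-negativity constraints will turn out to be inactive because the resulting policies are strictly positive exponentials. First I would dualize the feature-matching constraint of each agent $i$ with a multiplier vector $\theta_i$ and each normalization constraint with a scalar multiplier, and then use the factorization \eqref{ind} to write $\log\pi^\tau(\vec{a}|\vec{x})=\sum_i\log\pi^\tau_i(a_i|\vec{x})$, so that the causal entropy splits into a sum of per-agent, per-time conditional-entropy terms while the trajectory expectation still couples the agents through the shared state transitions. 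This exposes the standard time-recursive structure of causal entropy, which is what enables a dynamic-programming argument.

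Next I would run the induction backward from $\tau=T$. For the base case the policy $\pi^T_i$ influences only the terminal term, so its stationarity condition reduces to maximizing $-\sum_a\pi^T_i\log\pi^T_i+\sum_a\pi^T_i\langle\theta_i,F_i\rangle$ subject to normalization; the elementary fact that $\arg\max_p\{-\sum_a p(a)\log p(a)+\sum_a p(a)\,r(a)\}$ is the Gibbs distribution $p(a)\propto\exp(r(a))$ with optimal value $\softmax_a r(a)$ yields $\pi^T_i\propto\exp(\langle\theta_i,F_i\rangle)$ and $\log Z_i(T)=\softmax_a\langle\theta_i,F_i(\vec{x}(T),a)\rangle$, matching the stated boundary condition. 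For the inductive step I would assume that the optimal continuation value for agent $i$ from time $\tau+1$ onward equals $\mathbb{E}[\log Z_i(\tau+1)]$, isolate the terms of the Lagrangian that involve $\pi^\tau_i$, and observe that these are again of the entropy-plus-linear form, now with effective reward $W^\tau_i(\vec{x},a_i)=\langle\theta_i,F_i(\vec{x},a_i)\rangle+\mathbb{E}_{\pi^\tau_{-i}}[\log Z_i(\tau+1)]$. Applying the same Gibbs argument gives $\pi^\tau_i\propto\exp(W^\tau_i)$ and $\log Z_i(\tau)=\softmax_{a'}W^\tau_i(\vec{x},a')$, closing the recursion.

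The expectation $\mathbb{E}_{\pi^\tau_{-i}}$ inside $W^\tau_i$ is where the game-theoretic structure enters and is the main obstacle: the continuation value $\log Z_i(\tau+1)$ is a function of the next state $\vec{x}(\tau+1)$, and that state is governed by $P(\vec{x}'|\vec{x},\vec{a})$ which depends on the joint action $\vec{a}=(a_i,\vec{a}_{-i})$, so marginalizing out the opponents forces one to average $\log Z_i(\tau+1)$ against the opponents' policies $\pi^\tau_{-i}=\prod_{j\in-i}\pi^\tau_j$ together with the transition kernel. The care needed here is twofold: first, to verify that the causal-entropy recursion genuinely decouples the per-agent stationarity conditions at each fixed time, so that each agent's policy is a Gibbs response to its own $W^\tau_i$ with the opponents held fixed; and second, to confirm that the backward propagation of $\log Z_i$ is mutually consistent across agents, i.e. that the coupled system of recursions admits the stated simultaneous solution. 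I would close by noting that this fixed-point form has exactly the coupled structure already seen in \eqref{system1}, with $\langle\theta_i,F_i\rangle$ playing the role of the reward $R_i$, so well-posedness of the recursion can be argued in the spirit of the contraction results established earlier.
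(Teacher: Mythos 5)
Your proposal follows essentially the same route as the paper's proof: form the Lagrangian of \eqref{irl_max}, impose stationarity with respect to each $\pi^\tau_i$, and unwind the resulting conditions backward from $\tau=T$ so that the log-partition functions $\log Z_i(\tau)$ play the role of soft value functions, yielding the Gibbs form of the policies. Your added remarks on concavity of the causal entropy (so stationarity suffices for global optimality) and on checking that the coupled recursions are mutually consistent across agents are sensible refinements of the same argument rather than a different approach.
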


This result resembles the setting of our Markov games with exploration strategies; $W^\tau_i(\vec{x}(\tau),a_i(\tau))$ plays the role of Q-function in \eqref{finite-t} but with the difference that $\softmax$ is been used to define the value-function instead of the expectation in \eqref{finite-t}. $<\theta_i, F_i(\vec{x}(\tau)\!,a_i(\tau))>$ is the reward function, and $\pi^\tau_i$ is the policy and it is distributed according to the Boltzmann distribution with $\beta=1$.

An important consequence of this result is to be able to develop a gradient-based algorithm similar to the MCE-IRL \cite{ziebart2008maximum,bloem2014infinite} or an online inverse reinforcement learning (IRL) algorithm similar to \cite{rhinehart2017first} that can simultaneously infer the rewards and the policies. Such algorithm requires the gradient function that is given below.
\begin{theorem}\label{thm:grad}
The gradient of the dual problem with respect to $\theta_i$ is given by
$
\widehat{\mathbb{E}}[F_i(\vec{x},a_i)]-{\mathbb{E}}[F_i(\vec{x},a_i)].
$
\end{theorem}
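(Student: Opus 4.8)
The plan is to read the claim off the Lagrangian dual of the primal program \eqref{irl_max}, exploiting that Theorem \ref{thm:G} already identifies the inner maximizer. First I would dualize only the $M$ feature-matching constraints, attaching to agent $i$'s constraint the multiplier vector $\theta_i$, while keeping the per-step normalization and non-negativity constraints as the fixed, $\theta$-independent domain $\Delta$ over which the policies range. This yields the Lagrangian
\[
\mathcal{L}(\{\pi\},\theta)=H(\vec{a}||\vec{x})+\sum_{i\in[M]}\big\langle\theta_i,\ \widehat{\mathbb{E}}[F_i(\vec{x},a_i)]-\mathbb{E}[F_i(\vec{x},a_i)]\big\rangle,
\]
and the dual objective $d(\theta):=\max_{\{\pi\}\in\Delta}\mathcal{L}(\{\pi\},\theta)$; the sign of $\theta_i$ here is fixed by the convention of \eqref{irl_max} so that the inner maximizer is exactly the Boltzmann policy of Theorem \ref{thm:G}.

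Second, I would invoke Theorem \ref{thm:G}, which supplies the unique inner maximizer $\{\pi^\star(\theta)\}$: the joint causal entropy $H(\vec{a}||\vec{x})$ is strictly concave in the policies and the feature term is linear in the induced trajectory law, so the argmax is unique and coincides with the recursive Boltzmann policies of Theorem \ref{thm:G}. Uniqueness and concavity are precisely what guarantees that $d$ is differentiable. Third, I would apply Danskin's (envelope) theorem: because $\Delta$ is independent of $\theta$ and the maximizer is unique, $\nabla_{\theta_i}d(\theta)=\nabla_{\theta_i}\mathcal{L}(\{\pi\},\theta)\big|_{\{\pi\}=\{\pi^\star(\theta)\}}$. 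The only explicit occurrence of $\theta_i$ in $\mathcal{L}$ is the linear term, whose $\theta_i$-derivative with the policies held fixed is exactly $\widehat{\mathbb{E}}[F_i(\vec{x},a_i)]-\mathbb{E}[F_i(\vec{x},a_i)]$, the asserted residual.

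The main obstacle is justifying the envelope step in this coupled, sequential, multi-agent setting. The model expectation $\mathbb{E}[F_i]$ is taken over the trajectory law induced by \emph{all} agents' policies, so it is a multilinear functional of the per-step policies, and through the partition functions $Z_j$ the optimizer $\{\pi^\star\}$ depends on $\theta_i$ in a coupled way. The entire content of Danskin is that these implicit variations cancel: at the interior maximizer the stationarity condition $\partial\mathcal{L}/\partial\pi\big|_{\pi^\star}=0$ holds because the Boltzmann policies are strictly positive, so the chain-rule contribution $\sum\big(\partial\mathcal{L}/\partial\pi\big)\big(\partial\pi^\star/\partial\theta_i\big)$ vanishes. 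I would therefore verify (i) concavity of the joint causal entropy and (ii) interiority of $\{\pi^\star\}$ to discharge the uniqueness and stationarity hypotheses. An equivalent, Danskin-free route is to write $d(\theta)$ explicitly through the backward log-partition recursion of Theorem \ref{thm:G} and differentiate term by term, using $\nabla_{\theta_i}\log Z_i=\mathbb{E}[\nabla_{\theta_i}W_i^\tau]$ to telescope the recursion back into the feature expectations; this sidesteps the envelope theorem at the price of more bookkeeping across the horizon.
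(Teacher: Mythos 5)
Your conclusion is right and your fallback route is the paper's actual proof, but your primary route is genuinely different. The paper does not invoke the envelope theorem: it substitutes the recursive Boltzmann form of Theorem \ref{thm:G} into the causal entropy, observes that the $\softmax W^t_i$ terms telescope so that $H(\vec{a}||\vec{x})=-\sum_{t,i}\mathbb{E}[\theta_i^T F_i]+\sum_i\mathbb{E}_{\vec{x}^1}[\softmax_a W^1_i]$, and then differentiates this explicit expression, using the identities $\mathbb{E}_{\vec{x}(t)}[\nabla_{\theta_j}\softmax_a W^t_i]=\mathbb{E}_{\vec{x}(t)\cup a_i(t)}[\nabla_{\theta_j}W^t_i]$ and $\sum_a\Phi\,\nabla_{\theta_j}\pi_k(a|\vec{x}(t))=0$ to unroll the recursion and show $\nabla_{\theta_j}\mathbb{E}[\softmax_a W^1_j]=\sum_t\mathbb{E}[F_j]$ while the cross terms $i\neq j$ vanish. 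These two identities are precisely the hand-verified form of the stationarity cancellation you attribute to Danskin, so the two proofs encode the same mechanism; yours is shorter and more conceptual, the paper's is self-contained and does not need any structural hypotheses on the inner problem.

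That last point is where your proposal has a real soft spot. In the multi-agent setting the inner maximization is \emph{not} a concave program in the tuple $(\pi_1,\dots,\pi_M)$: the trajectory law, and hence both $\mathbb{E}[F_i]$ and the causal entropy, are multilinear rather than linear or concave in the per-agent policies (equivalently, the product-form constraint $\pi(\vec{a}|\vec{x})=\prod_j\pi_j(a_j|\vec{x})$ is a non-convex subset of joint policies). So "strictly concave, hence unique maximizer, hence $d$ differentiable" does not follow, and the classical Danskin hypotheses are not available. What does survive is the weaker envelope-at-a-stationary-point argument you also sketch: Theorem \ref{thm:G} gives an interior stationary point of the Lagrangian, and at such a point the chain-rule contributions $\sum(\partial\mathcal{L}/\partial\pi)(\partial\pi^\star/\partial\theta_i)$ vanish, leaving only the explicit linear term $\widehat{\mathbb{E}}[F_i]-\mathbb{E}[F_i]$. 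If you present the proof, base it on that stationarity argument (or on the explicit telescoping computation), not on concavity and uniqueness, which you would not be able to verify.
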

\begin{algorithm}
\caption{Online MMCE-IRL for agent $i$}\label{alg:3}
\begin{algorithmic}[1]
\STATE {initialize: $\{\theta_j\}$}
\FOR {$j\in-i$}
\STATE $\widetilde{F}^{(\tau)}_j\leftarrow \sum_{(\vec{x},a_j)\in H^t} F_{j}(\vec{x},a_j)$
\STATE $R_j^{(\tau)} \leftarrow <\theta_j, F_{j}>$
\ENDFOR 
\STATE $\vec{\pi}\leftarrow \text{MGE}(\{R_j^{(\tau)}: j\in-i\},R_i)$
\FOR {$j\in[M]$}
\STATE $\bar{F}_j\leftarrow\mathbb{E}_{\vec{\pi}}[F_j]$
\STATE $\small{\theta_j\leftarrow\text{Project}_{||\theta_j||\leq B}\big(\theta_j-\varrho(\widetilde{F}^{(\tau)}_j-\bar{F}_j)\big)}$
\ENDFOR
\end{algorithmic}
\end{algorithm}
Algorithm \ref{alg:3} summarizes the steps of an online inverse reinforcement learning (IRL) approach for agent $i$ in which this agent infers the individual rewards of its opponent by observing their behavior and knowing its own reward function. Similar to \cite{rhinehart2017first}, we include a projection step to ensure that the reward functions are bounded. In Algorithm \ref{alg:fb}, $H^t$ denotes the trajectories up to time $t$, i.e., pairs of states and actions and $\varrho$ is the step size of the gradient descent.

\section{Experimental Results}\label{sec:exp}
This section summarizes different settings where we evaluated the performance of our algorithms.

\textbf{Pursuit Game:}
The pursuit problem has been widely examined in multi-agent systems \cite{weinberg2004best}. 
We used different variants of this problem in our experiments: 
First, we consider two agents; one hunter (h) and one prey (p). 
Figure \ref{fig:fb} illustrates the state space (nodes of the graph) and the possible actions of the agents. 
The red and black arrows indicate the possible actions for the hunter and the prey, respectively. 
Each agent can choose to stay at its current position or follow one of its corresponding arrows.
This game is deterministic, i.e., specifying the states and the actions, the next state of the agents are known with probability one.
The agents can observe the current position of their opponent and they move simultaneously.
The goal of the predator is to hunt the prey during a time horizon of length $T=22$. 

This is a Markov game in which $-R_p(x_h,x_p,a_p)=R_h(x_h,x_p,a_h)$ $=0.4$, when $x_h=x_p\in\{0,1,...,8\}$ and zero, otherwise. 
A similar but simpler variant of this game was studied by \cite{weinberg2004best} in which both agents play on the same grid and the prey moves left or up at random.
The output of MGE-F is a set of time-dependent mixed strategies but we assumed that the agents only executed the actions with the maximum  probabilities. 
In this case, the average score of the hunter and its average number of hunts using MGE-F during 10 games with random initial states were $1.16$ and 2.9, respectively. We compared the performance of our algorithm with NSCP in \cite{weinberg2004best} for which both players used the NSCP to learn their policies. The results of NSCP were 0.84 and 2.1.

In the second variant, we consider three agents; two hunters $\{h_1,h_2\}$ and one prey $\{p\}$.
They move on the same graph as in Figure \ref{fig:fb} but unlike the previous setting, they can either stay at their current position or move according to the red arrows. The hunters have only $T=3$ steps to catch the prey without colliding with each other. 
We used the following reward for $h_1$,
\begin{small}\begin{align}\notag
&R_{h_1}(x_{h_1},x_{h_2},x_p,a_{h_1})=\begin{cases} 
      0 & x_{h_1}\not\in \{x_{h_2}, x_p\} \\
      -15/4 & x_{h_1}=x_{h_2}\neq x_p \\
      -10/4 & x_{h_1}=x_{h_2}= x_p \\
      +5/4 & x_{h_1}=x_p\neq x_{h_2}.
   \end{cases}
\end{align}\end{small}
The reward of the second hunter was defined similarly, and the prey's reward was given by $R_{p}(x_{h_1},x_{h_2},x_p,a_{h_1})=0$, when $x_{p}\not\in \{x_{h_1},x_{h_2}\}$ and -1/8, otherwise.

To relax the assumption of Theorem \ref{thmm2}, we modified the update rule of MGE-F. 
More precisely, we reduced the exploration property of the agents by adding a portion of the previously estimated Q-function to the new Q-function, i.e., instead of line 10 of MGE-F, we used
\begin{align}\label{mod}
Q_i^{s+1}\leftarrow \alpha U_i(Q_{-i}^{s},V_i^{t+\kappa+1}) + (1-\alpha)Q_{i}^{s},
\end{align}
where $\alpha\in[0,1]$. Note that $\alpha=1$ leads to the same update rule as in MGE-F.
This modification will not disturb the convergence of the algorithm as long as $\gamma_{\alpha b}+(1-\alpha)<1$, where $\gamma_{\alpha b}$ denotes the contraction coefficient of $U_i$ when $\alpha\max_j\{||R_j||_\infty\}\leq b$. 
Thus, by selecting proper $\alpha$, the convergence can be guaranteed for reward functions that have norms greater than the bound in Theorem \ref{thmm2}. 
The cost for this relaxation is losing the convergence speed.
Figure \ref{alpha_eff} shows the convergence error for the second variant of the pursuit game when $\alpha\in\{0.05,0.2,0.4,0.6\}$.
All different values of $\alpha$ led to the same set of policies.  In all the experiments in this section, we used $\beta=1$.

\begin{figure}
\centering
\includegraphics[scale=.5]{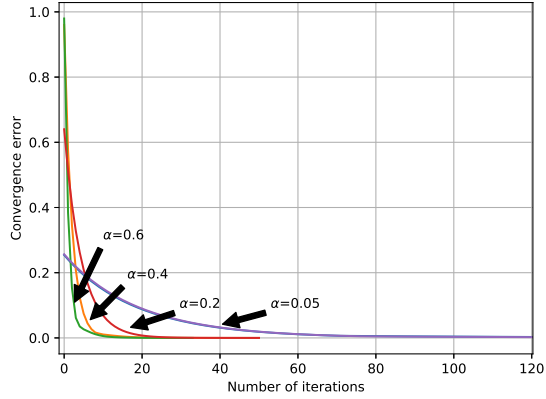}
\caption{Effect of $\alpha$ on the convergence speed of the modified updating rule in (\ref{mod}) for the second variant of the pursuit game. } \label{alpha_eff} 
\end{figure}
Table in Figure \ref{t2} presents the learned policies for the first moves of $h_1$ and $p$, when the initial states are $I_1:=\{h_1\in0, h_2\in8, p\in4\}$\footnote{$h_1$ is at node 0, $h_2$ is at node 8 and the prey is at node 4.} and $I_2:=\{h_1\in0, h_2\in8, p\in2\}$.
 
\begin{figure}
\centering
\includegraphics[scale=.33]{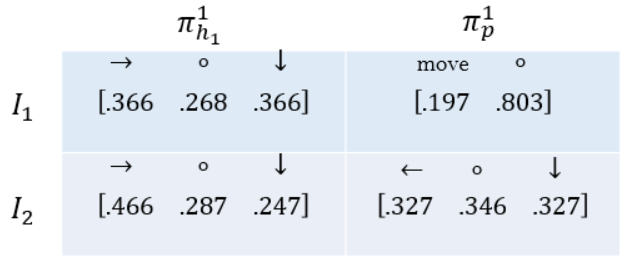}
\caption{The learned mixed policies for the first move of $h_1$ and $p$ in the pursuit games with three players. }\label{t2} 
\end{figure}
Another interesting variant of this game is when the initial states of the players are $\{h_1\in1, h_2\in5, p\in2\}$ and $T=1$, i.e., each player can only execute one action. 
As we expected, the output of MGE-F was that the best action\footnote{Action with the maximum probability.} for each player is not to move. 
This behavior can be explained using the predictive ability of the players. 

\textbf{Rabbit-Hole Game: }
This is also a pursuit game in which a fox chasing a rabbit over a grid, Figure \ref{game2}. There is a prize +$0.3$ for the rabbit in a small hole. Both rabbit and fox can enter the hole. Fox will gain +2 points, when it catches the rabbit and the rabbit loses 2 points. 
Each agent has only $12$ moves. 
We modeled this game using a finite-time horizon SG with $T=12$, and learned the policies using MGE-F. 
Interestingly, the behavior of the learned policies was that with high probability, the rabbit entered the hole when it knew that the fox is far enough from the entrance and the fox moved such that its distance to the rabbit and the entrance is minimized.   
The average scores of the rabbit and the fox after playing 10 rounds with random initial states were 0.06 and 0, respectively.

\textbf{Grid Games:}
We also studied the behavior of our learning algorithm in two different grid games all of which are two-player games: (I) a stochastic game, Figure \ref{game} and (II) a cooperative game that is the stochastic version of the Battle of Sexes, Figure \ref{game2}. 
In (I), agents are rewarded +$30$, when they reached their goals and punished -$1$, when they collided.
In (II), both agents wish to reach G without collision, but if they try to go over the barrier (indicated by curves in Figure \ref{game2}), they may fail with probability $0.5$. The reward at G was +2 and the collision cost was losing one point.

We compared our algorithm with three algorithms in \cite{greenwald2003correlated}. The selected algorithms (Q-learning, uCE-Q, lCE-Q)  were trained by repeatedly playing the games. As it is discussed in \cite{greenwald2003correlated}, they all converge to a symmetric Nash equilibrium in (I) and asymmetric Nash equilibrium in (II).  
It is important to mention that the main difference between MGE and the algorithms in \cite{greenwald2003correlated} is that they learn the best-response policies for the agents but MGE learns the exploration policies.   
Table in Figure \ref{game2} shows the average scores of different learning algorithms after playing the games 1000 times. 
\begin{figure}
\centering
\vspace{.7cm}
\includegraphics[scale=.34]{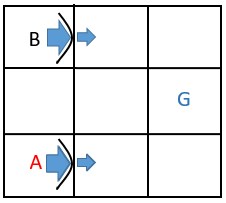}
\hspace{.3cm}
\includegraphics[scale=.38,angle =90]{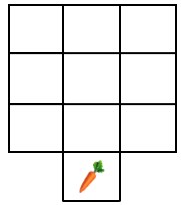}
\hspace{.6cm}
\includegraphics[scale=.58,angle =0]{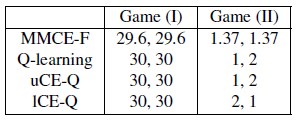}
 \caption{Upper left: Initial states of the grid game (II). G denotes the goal of both agents. They can successfully pass the barriers with probability 0.5. Upper right: The state space of the rabbit-hole game.
Table: Average scores of the players in grid games.}\label{game2}
\end{figure}

\textbf{Driving Scene:}
In this experiment, we simulated a driving scene in which 4 agents (three vehicles and a pedestrian) interact at a road junction. The initial positions and the corresponding goals of the agents are illustrated in Figure \ref{game3}.  In this experiment, the states and the actions are discrete, i.e., agents can choose to stay or move to one of their neighboring cells. 
This complex multi-agent scenario is quite interesting for self-driving cars application.
We used Algorithm \ref{alg:fb} (MGE-FB) with $\Psi(O_{-i}^\tau(x_i))=-\sum_{j\in-i}\mu_jO_j^\tau(x_i)$.
Actions with the maximum probabilities are: all cars stop before the zebra crossing while the pedestrian walks toward its destination. 
Then, car 2 drives towards its goal. Cars 1 and 3 are the next agents that drive to their goals, respectively.  All the agents select their shortest routes. These behaviors match human driving.
\begin{figure}
\centering
\includegraphics[scale=.39]{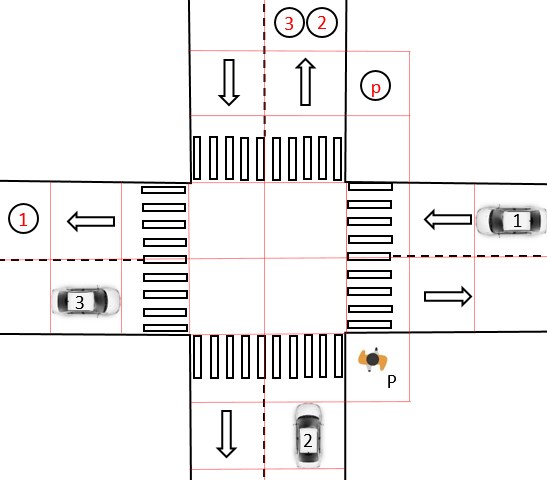}
 \caption{Initial state of the driving scene. Goals are indicated by circles, e.g., car 1 wants to go to circle 1. }\label{game3}
\end{figure}

\section{Discussion}
We developed several multi-agent algorithms in which the agents use exploration policies while considering their opponent's strategies. We showed the convergence of our algorithms and also evaluated their learning behavior using several game-theoretic examples. 
A possible future work is extending the presented results to continuous state-action spaces. This is of great importance for several applications such as robotic and self-driving cars.

\section*{Supplementary Materials}

\begin{theorem}\label{thmma}
Assume that $\max_i||R_i||_\infty\leq\frac{(1-\gamma)^2}{2M\gamma\beta}$. Then, for every $i\in [M]$, $T_i$ is a contraction mapping.  
\end{theorem}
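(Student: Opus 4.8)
The plan is to invoke the Banach fixed-point theorem on the coupled system \eqref{system1}, so I must exhibit a single constant $c<1$ with
\[
\|T_i(Q_{-i},Q_i)-T_i(\bar{Q}_{-i},\bar{Q}_i)\|_\infty \le c\,\max_{j\in[M]}\|Q_j-\bar{Q}_j\|_\infty
\]
on a domain that $T_i$ preserves. First I would fix that domain. Because the value-function in \eqref{eq:va-f} is a Boltzmann-weighted (hence convex) average of the entries of $Q_i$, it obeys $\|V_i\|_\infty\le\|Q_i\|_\infty$, and \eqref{Q_fun} then gives $\|T_i(Q_{-i},Q_i)\|_\infty\le\|R_i\|_\infty+\gamma\|Q_i\|_\infty$. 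Thus the ball $\{Q_i:\|Q_i\|_\infty\le \|R_i\|_\infty/(1-\gamma)\}$ is invariant, and I may assume throughout that $\|Q_i\|_\infty\le\rho_i:=\|R_i\|_\infty/(1-\gamma)$; this a priori bound is what eventually lets the smallness hypothesis do its work.

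Next I would split the difference by adding and subtracting the cross term in which the opponents act under the policy induced by $Q_{-i}$ while the value-function is built from $\bar{Q}_i$. Since the reward terms cancel,
\[
T_i(Q_{-i},Q_i)-T_i(\bar{Q}_{-i},\bar{Q}_i)=\gamma\big\langle \pi^{Q}_{-i},\,P(V^{Q_i}_i-V^{\bar{Q}_i}_i)\big\rangle+\gamma\big\langle \pi^{Q}_{-i}-\pi^{\bar{Q}}_{-i},\,P V^{\bar{Q}_i}_i\big\rangle,
\]
so the estimate reduces to two separate Lipschitz computations. For the first summand, $\langle\pi^{Q}_{-i},P\,\cdot\,\rangle$ averages against probability distributions and is therefore a non-expansion in $\|\cdot\|_\infty$, so it suffices to bound $\|V^{Q_i}_i-V^{\bar{Q}_i}_i\|_\infty$. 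Writing $V(Q)=\sum_a p_a(Q)Q_a$ with $p(Q)$ the Boltzmann weights, differentiation gives $\partial V/\partial Q_a=p_a\big(1+\beta(Q_a-V)\big)$, whence $\|\nabla V\|_1\le 1+\beta\,\mathrm{osc}(Q)\le 1+2\beta\rho_i$ using $|Q_a-V|\le\mathrm{osc}(Q)\le 2\|Q_i\|_\infty$. Integrating along the segment joining $Q_i$ and $\bar{Q}_i$ yields $\|V^{Q_i}_i-V^{\bar{Q}_i}_i\|_\infty\le(1+2\beta\rho_i)\|Q_i-\bar{Q}_i\|_\infty$, so the first summand is at most $\gamma(1+2\beta\rho_i)\|Q_i-\bar{Q}_i\|_\infty$.

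For the second summand I would bound $\|P V^{\bar{Q}_i}_i\|_\infty\le\|\bar{Q}_i\|_\infty\le\rho_i$ and then control the perturbation of the opponents' joint policy. By \eqref{ind} we have $\pi_{-i}=\prod_{j\ne i}\pi_j$, so total-variation subadditivity for product measures gives $\|\pi^{Q}_{-i}-\pi^{\bar{Q}}_{-i}\|_1\le\sum_{j\ne i}\|\pi^{Q_j}_j-\pi^{\bar{Q}_j}_j\|_1$; and the Boltzmann map $Q\mapsto p(Q)$ has Jacobian $\beta p_a(\delta_{ab}-p_b)$ whose induced $\ell_\infty\!\to\!\ell_1$ norm is at most $2\beta\sum_a p_a(1-p_a)\le 2\beta$, so each policy difference is at most $2\beta\|Q_j-\bar{Q}_j\|_\infty$. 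Hence the second summand is at most $2\gamma\beta(M-1)\rho_i\max_{j\ne i}\|Q_j-\bar{Q}_j\|_\infty$.

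Collecting the two bounds and substituting $\rho_i=\|R_i\|_\infty/(1-\gamma)$ gives the overall coefficient
\[
c=\gamma+\frac{2\gamma\beta\|R_i\|_\infty}{1-\gamma}+\frac{2\gamma\beta(M-1)\|R_i\|_\infty}{1-\gamma}=\gamma+\frac{2\gamma\beta M\,\|R_i\|_\infty}{1-\gamma},
\]
which falls below $1$ precisely when $\|R_i\|_\infty\le(1-\gamma)^2/(2M\gamma\beta)$, i.e.\ the hypothesis (the slack in $\mathrm{osc}(Q)\le 2\|Q_i\|_\infty$ and in $\sum_a p_a(1-p_a)<1$ keeps the contraction strict at the boundary). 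I expect the main obstacle to be the value-operator estimate: unlike the ordinary Bellman $\max$-operator, the Boltzmann average is not a non-expansion, and its Lipschitz constant exceeds $1$ by an amount proportional to $\beta\,\mathrm{osc}(Q)$, which is exactly why the invariant-ball bound on $\|Q_i\|_\infty$—and therefore the norm restriction on the rewards—is indispensable.
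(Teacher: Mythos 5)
Your proof is correct, arrives at the paper's condition exactly, and its overall architecture coincides with the paper's: both split $T_i(Q_{-i},Q_i)-T_i(\bar Q_{-i},\bar Q_i)$ into a term where only $Q_i$ changes and a term where only the opponents' policies change, both rest on the a priori bound $\|V_i\|_\infty\le\|R_i\|_\infty/(1-\gamma)$, and both assemble the same coefficient $\gamma+2\gamma\beta M\|R_i\|_\infty/(1-\gamma)$ (cf.\ \eqref{eq:in10} and \eqref{eq:in2}). Where you genuinely differ is in how the two Lipschitz sub-estimates are obtained. The paper's key technical ingredient is Lemma~\ref{lemma0}, the bound $\|Exp_\beta(Q)-Exp_\beta(\tilde Q)\|_1\le 2\alpha\beta\|Q-\tilde Q\|_\infty$, proved through the rather involved induction of Lemma~\ref{lemma2}; you obtain the same bound by computing the softmax Jacobian $\beta p_a(\delta_{ab}-p_b)$ and its $\ell_\infty\!\to\!\ell_1$ induced norm, which is much shorter and makes the constant transparent. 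Likewise, for the dependence on $Q_i$ the paper splits once more into an inner-expectation change plus a policy change and invokes Lemma~\ref{lemma4}, while you differentiate $V(Q)=\sum_a p_a(Q)Q_a$ directly to get the Lipschitz constant $1+\beta\,\mathrm{osc}(Q)$ --- essentially the computation the paper performs only later, in Lemma~\ref{lemma:soft_lip} for Theorem 3, so your route unifies the two. You also handle the product over opponents by total-variation subadditivity where the paper uses the direct-sum identity $Q_{-i}=\bigoplus_{j\in-i}Q_j$; both yield the factor $M-1$. Finally, one point where you are more careful than the paper: the bound $\|V_i\|\le B_i(1+\gamma+\cdots)=B_i/(1-\gamma)$ is simply asserted there, and it is only valid on a suitable domain; your explicit invariant-ball argument supplies the missing justification and makes the application of the Banach fixed-point theorem airtight.
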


\begin{theorem}\label{thmm2}
If for every $i$, $\max\{||R_i||_\infty,||R_{i,F}||_\infty\}\leq 1/(2\beta(M-1)(1+T))$, then the system of equation in (6) admits a unique solution.
\end{theorem}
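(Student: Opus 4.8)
The plan is to prove uniqueness by \emph{backward induction on the time index} $\tau$, using at each step the Banach fixed-point theorem applied to the \emph{joint} update across all agents, with the boundary condition $V_i^{T}=R_{i,F}$ anchoring the induction. At $\tau=T$ the value functions of all agents are fixed, hence unique. Assuming $\{V_i^{\tau+1}\}_{i\in[M]}$ have already been uniquely determined, I would show the coupled system $Q_i^{\tau}=U_i(Q_{-i}^{\tau},V_i^{\tau+1})$, $i\in[M]$, has a unique solution $\{Q_i^{\tau}\}_i$, from which the $\{V_i^{\tau}\}_i$ are recovered uniquely via \eqref{finite-t}; iterating from $\tau=T-1$ down to $\tau=0$ then gives uniqueness of the entire sequence $\{\vec{Q}^0,\dots,\vec{Q}^{T}\}$.

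For the per-step argument, the decisive structural fact is that $U_i$ depends on its first argument $Q_{-i}^{\tau}$ (the opponents' Q-functions, entering only through the Boltzmann policies $\pi_{-i}^{\tau}=Exp\{Q_{-i}^{\tau}\}$) but \emph{not} on $Q_i^{\tau}$ itself. I would therefore form the joint map $F(\{Q_j^{\tau}\}_j):=\{U_i(Q_{-i}^{\tau},V_i^{\tau+1})\}_i$ on the product space equipped with the metric $\max_j||Q_j-\bar{Q}_j||_\infty$. Because the $i$th output ignores the $i$th input, the contraction of each $U_i$ in its first argument (the earlier theorem) transfers directly to $F$: for any two collections, $||F_i(\{Q_j^{\tau}\})-F_i(\{\bar{Q}_j^{\tau}\})||_\infty\le c\,||Q_{-i}^{\tau}-\bar{Q}_{-i}^{\tau}||_\infty\le c\,\max_j||Q_j^{\tau}-\bar{Q}_j^{\tau}||_\infty$, so $F$ is a contraction with constant $c$ on the product sup-metric, and the Banach theorem yields the unique fixed point.

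It remains to confirm the constant $c=2\beta(M-1)(1+T)\,\omega$, with $\omega:=\max_i\{||R_i||_\infty,||R_{i,F}||_\infty\}$. Since the dependence on $Q_{-i}^{\tau}$ is solely through $\mathbb{E}_{\pi_{-i}^{\tau}}[V_i^{\tau+1}]$, the perturbation obeys $\big|\mathbb{E}_{\pi_{-i}^{\tau}}[V]-\mathbb{E}_{\bar\pi_{-i}^{\tau}}[V]\big|\le ||V||_\infty\,||\pi_{-i}^{\tau}-\bar\pi_{-i}^{\tau}||_1$, which I would bound with three ingredients: (i) the product-measure total-variation inequality $||\prod_{j}\pi_j-\prod_{j}\bar\pi_j||_1\le\sum_{j\in-i}||\pi_j-\bar\pi_j||_1$, giving the factor $(M-1)$; (ii) the softmax Lipschitz estimate $||Exp\{Q_j\}-Exp\{\bar{Q}_j\}||_1\le 2\beta\,||Q_j-\bar{Q}_j||_\infty$, obtained by integrating the Jacobian $\partial\pi(a)/\partial Q(b)=\beta\,\pi(a)(\delta_{ab}-\pi(b))$ along the segment between $Q_j$ and $\bar{Q}_j$ and using $\sum_a\pi(a)\,|\Delta(a)-\mathbb{E}_\pi\Delta|\le 2||\Delta||_\infty$; and (iii) the bound $||V_i^{\tau+1}||_\infty\le(1+T)\omega$, from an inner backward induction on \eqref{finite-t} in which each value function accumulates at most $T+1$ reward terms. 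Multiplying these gives $c\le 2\beta(M-1)(1+T)\omega$, which is $\le 1$ (in fact $<1$, since the value bound is strict for $\tau\ge0$) under the stated hypothesis on $\omega$.

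The main obstacle is ingredient (ii) in combination with (i): controlling how sensitively the \emph{product} opponent policy reacts to perturbations of the underlying Q-functions, uniformly over all states and over the conditioning action $a_i$. The Jacobian computation itself is elementary, but the bound must be made independent of the (possibly very peaked) policies — which is exactly what integrating along the segment and the inequality $\sum_a\pi(a)|\Delta(a)-\mathbb{E}_\pi\Delta|\le 2||\Delta||_\infty$ accomplish — and one must check that the horizon-dependent value bound in (iii) really is absorbed by the prefactor $1/(2\beta(M-1)(1+T))$, since the linear-in-$T$ growth of $||V||_\infty$ is precisely what that hypothesis is calibrated to cancel.
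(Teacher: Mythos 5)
Your proposal is correct and takes essentially the same route as the paper: both reduce the claim to showing that $U_i$ is a contraction with respect to the opponents' Q-functions, assembling the Lipschitz constant from the total-variation bound on perturbed expectations, the $2\beta$ softmax Lipschitz estimate, the factor $(M-1)$ from the product over opponents, and the bound $||V_i^{\tau+1}||_\infty\leq (T+1)\omega$, after which uniqueness follows by the Banach fixed-point theorem applied backward in $\tau$ (a scaffolding the paper leaves implicit in its algorithm). The only cosmetic difference is that you derive the softmax constant $2\beta$ by integrating the Jacobian and recover strict contraction from the slack in the value bound, whereas the paper's auxiliary lemma supplies an extra factor $\alpha<1$ for the same purpose.
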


\begin{theorem}\label{them_conv}
Suppose that . Then, Algorithm 3 converges to a set of unique Q-functions as the number of iterations $K$ tends to infinity.
\end{theorem}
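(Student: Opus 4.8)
The plan is to view one complete iteration of Algorithm~\ref{alg:fb} --- a forward pass followed by a backward pass --- as a single operator $\Phi$ acting on the joint collection of Q-functions $\vec{Q}=\{\widetilde{Q}_j^\tau\}_{j\in[M],\,0\le\tau\le T}$, equipped with the norm $\|\vec{Q}-\overline{\vec{Q}}\|:=\max_{j,\tau}\|\widetilde{Q}_j^\tau-\overline{\widetilde{Q}}_j^\tau\|_\infty$. Since the occupancy measures are recomputed from $\vec{Q}$ at the start of each iteration via \eqref{forw} and then fed into the backward recursion \eqref{backw}, the $k$th iterate is exactly $\vec{Q}^{(k)}=\Phi(\vec{Q}^{(k-1)})$. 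It therefore suffices to show that $\Phi$ is a contraction in this norm; convergence to a unique set of Q-functions as $K\to\infty$ then follows from the Banach fixed-point theorem. I would run two inputs $\vec{Q}$ and $\overline{\vec{Q}}$ through both passes and track the propagation of the perturbation $D:=\|\vec{Q}-\overline{\vec{Q}}\|$.

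\textbf{Forward pass.} The only nonlinearity in $G_i$ is the Boltzmann policy $Exp\{\widetilde{Q}_i^\tau\}$ of \eqref{policyexp}. A direct computation of its Jacobian shows the softmax map is Lipschitz, $\|Exp\{\widetilde{Q}_i^\tau\}(\cdot|x')-Exp\{\overline{\widetilde{Q}}_i^\tau\}(\cdot|x')\|_1\le 2\beta\|\widetilde{Q}_i^\tau-\overline{\widetilde{Q}}_i^\tau\|_\infty$. Substituting this into \eqref{forw}, using that each $P(\cdot|x',a)$ and each $O_i^\tau$ is a probability distribution (so the transition and occupancy factors do not expand the $L_1$ norm), and unrolling from the shared boundary condition $O_i^0=\delta_i$ gives the telescoping estimate
\begin{align}\notag
\|O_j^\tau-\overline{O}_j^\tau\|_1\le 2\beta\sum_{s<\tau}\|\widetilde{Q}_j^s-\overline{\widetilde{Q}}_j^s\|_\infty\le 2\beta T\,D.
\end{align}

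\textbf{Backward pass.} First I would establish the a priori bound $\|\widetilde{Q}_i^\tau\|_\infty\le\xi=(T+1)(\omega+\varphi)$ by unrolling \eqref{backw} against the boundary $\widetilde{Q}_i^{T+1}=R_{i,F}$ and invoking $\|R_i\|_\infty,\|R_{i,F}\|_\infty\le\omega$ and $\|\Psi\|_\infty\le\varphi$. Writing $\Delta^\tau:=\max_i\|\widetilde{Q}_i^\tau-\overline{\widetilde{Q}}_i^\tau\|_\infty$ for the \emph{output} Q-functions, the difference of the two $B_i$-recursions splits into an occupancy term, bounded by the hypothesis on $\Psi$ together with the forward estimate as $L\max_{j\in-i}\|O_j^\tau-\overline{O}_j^\tau\|_1\le 2\beta TL\,D$, and a value term $\mathbb{E}_{Exp\{\widetilde{Q}_i^{\tau+1}\}}[\widetilde{Q}_i^{\tau+1}]-\mathbb{E}_{Exp\{\overline{\widetilde{Q}}_i^{\tau+1}\}}[\overline{\widetilde{Q}}_i^{\tau+1}]$. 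Differentiating the softmax-weighted average $V(Q)=\sum_a Exp\{Q\}(a)\,Q(a)$ gives $\partial V/\partial Q_c=Exp\{Q\}(c)\big(1+\beta(Q_c-V)\big)$, so with $\|Q\|_\infty\le\xi$ its Lipschitz constant is at most $1+2\beta\xi$. This produces the scalar recursion $\Delta^\tau\le 2\beta TL\,D+(1+2\beta\xi)\Delta^{\tau+1}$ with $\Delta^{T+1}=0$, which unrolls into a geometric sum
\begin{align}\notag
\max_\tau\Delta^\tau\le 2\beta TL\,D\sum_{k=0}^{T}(1+2\beta\xi)^k=\frac{TL}{\xi}\big[(1+2\beta\xi)^{T+1}-1\big]D.
\end{align}
Bounding $(1+2\beta\xi)^{T+1}\le\exp\big(c\,\beta(T+1)\xi\big)$ for an absolute constant $c$, the overall contraction coefficient is of order $\frac{TL}{\xi}\exp\big(\beta(T+1)\xi\big)$, and the hypothesis $2LT\le\xi\exp(-\beta(T+1)\xi)$ is exactly of the form that forces this coefficient below $1$; hence $\Phi$ is a contraction and the claim follows.

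The main obstacle --- and the reason the hypothesis is so stringent --- is that the value term is \emph{not} nonexpansive: because the Boltzmann average reweights each coordinate by $1+\beta(Q_c-V)$, its Lipschitz constant exceeds $1$, so iterating the backward recursion over the horizon amplifies perturbations geometrically and yields the factor $\exp(\beta(T+1)\xi)$. Taming this amplification is what forces the Lipschitz constant $L$ of the coupling functional $\Psi$ to be exponentially small in $T$. The remaining work is bookkeeping: keeping the inter-agent coupling inside the single uniform norm $\|\cdot\|$ (the forward pass is agent-wise decoupled, and the only cross-agent dependence, through $\Psi(O_{-i}^\tau)$, is already absorbed by the $\max_{j\in-i}$ in the hypothesis on $\Psi$), and verifying that the a priori bound $\|\widetilde{Q}_i^\tau\|_\infty\le\xi$ persists along the entire iteration so that the value-term Jacobian estimate remains valid.
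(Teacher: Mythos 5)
Your proposal follows essentially the same route as the paper: both track the perturbation through the forward pass (softmax-policy Lipschitz constant $2\beta$ in $\|\cdot\|_1$ vs.\ $\|\cdot\|_\infty$, telescoped over the horizon from the fixed boundary $O^0_i=\delta_i$) and through the backward pass (Lipschitz constant of the Boltzmann-averaged value of order $1+\beta\xi$), then combine them into a geometric amplification whose coefficient the condition $2LT\le\xi\exp(-\beta(T+1)\xi)$ forces below one; the paper phrases this as a bound on successive iterate differences plus a Cauchy-sequence lemma rather than as a Banach fixed-point argument on the one-iteration operator, but the two framings are interchangeable. The one substantive discrepancy is quantitative: your Jacobian bound for $V(Q)=\sum_a Exp\{Q\}(a)\,Q(a)$ gives $1+2\beta\xi$, which propagates to $\exp(2\beta(T+1)\xi)$ and does \emph{not} close under the stated hypothesis (you are left with a coefficient of order $\tfrac12 e^{\beta(T+1)\xi}$), whereas the paper's gradient lemma sharpens the constant to $S=1+\xi\beta$ via $\|\nabla f_\alpha\|_1\le\tfrac{\beta}{2}\max_{i,j}|\alpha_i-\alpha_j|\le\beta\|\alpha\|_\infty$, which is exactly what makes the exponent and the factor $2LT/\xi$ match the theorem's condition.
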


\begin{theorem}\label{thm:G}
Solution of the multi-agent MCE (7) is recursively given by
\begin{align*}
&\pi_{i}^\tau\!=\!\frac{1}{Z_i(\tau)}\exp\!\Big(W^\tau_i(\vec{x}(\tau),a_i(\tau))\Big),\\
&W^\tau_i(\vec{x}(\tau)\!\!,a_i(\tau))\!=\!\theta_i^T F_i(\vec{x}(\tau)\!,a_i(\tau))\!+\!\mathbb{E}_{\pi_{-i}^\tau}\![\log Z_i(\tau\!+\!1)],\\
&\log Z_i(\tau)\!=\!\!\softmax_{a'}W^\tau_i(\vec{x}(\tau),a'),\\
&\log Z_i(T)=\softmax_{a\in\mathcal{A}} \theta_i^TF_i(\vec{x}(T),a),
\end{align*}
where $\pi_{-i}^t=\prod_{j\in-i}\pi_j^t$.
The boundary condition is $Z_i(T+1)=1$ for all the agents.
\end{theorem}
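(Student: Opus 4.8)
The plan is to recognize (\ref{irl_max}) as a concave maximization: the causal entropy $H(\vec{a}\|\vec{x})$ is concave in the collection of conditional policies $\{\pi_i^\tau\}$, while feature matching and normalization are linear and the simplex constraints are convex, so Lagrangian stationarity (KKT) characterizes the global optimizer and strict concavity of the Shannon entropy on each simplex gives uniqueness. I would attach multipliers $\theta_i$ to the per-agent feature-matching constraints — these are exactly the reward weights appearing as $\langle\theta_i,F_i\rangle$ — and multipliers to the per-state normalization constraints, leaving the nonnegativity constraints inactive and verifying a posteriori that the exponential-form solution satisfies them. The key structural simplification, coming from the factorization $\pi^\tau(\vec{a}|\vec{x})=\prod_j\pi_j^\tau(a_j|\vec{x})$ in (\ref{ind}) and the Markov property, is that the joint causal entropy decomposes into a sum over time and over agents of expected one-step Shannon entropies, so that agent $i$'s subproblem can be read as a single-agent maximum causal entropy problem in which the opponents' policies $\pi_{-i}^\tau$ are treated as causally-revealed, stochastic side information folded into the environment dynamics.

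With this reduction, the proof is a backward induction in $\tau$ resting on a single variational identity: for any bounded $W$ on the finite action set, $\max_{\pi}\{\mathbb{E}_\pi[W]+H(\pi)\}=\softmax_a W(a)$, attained uniquely at $\pi\propto\exp(W)$. For the base case $\tau=T$ the only terms in agent $i$'s objective are the reward $\langle\theta_i,F_i(\vec{x}(T),a)\rangle$ and its own action entropy; the identity immediately yields the Boltzmann policy and $\log Z_i(T)=\softmax_a\langle\theta_i,F_i(\vec{x}(T),a)\rangle$, with $Z_i(T+1)=1$ encoding the empty horizon tail. For the inductive step I would assume the future soft value of agent $i$ from any state at time $\tau+1$ equals $\log Z_i(\tau+1)$, then differentiate the Lagrangian with respect to $\pi_i^\tau(\cdot|\vec{x})$ holding $\pi_{-i}^\tau$ fixed. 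The derivative separates into a direct part — reward plus the entropy of $\pi_i^\tau$ — and an indirect part capturing how $\pi_i^\tau$ reshapes the distribution over the successor state and hence the already-evaluated future value; because transitions are driven by the joint action, averaging the successor value over the opponents' actions produces precisely $\mathbb{E}_{\pi_{-i}^\tau}[\log Z_i(\tau+1)]$. The effective per-step objective is therefore $\mathbb{E}_{\pi_i^\tau}[W_i^\tau]+H(\pi_i^\tau)$ with $W_i^\tau=\langle\theta_i,F_i\rangle+\mathbb{E}_{\pi_{-i}^\tau}[\log Z_i(\tau+1)]$, and one more application of the variational identity gives the stated policy together with $\log Z_i(\tau)=\softmax_{a'}W_i^\tau(\vec{x}(\tau),a')$, which is the soft value passed to step $\tau-1$ and closes the induction.

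The main obstacle I anticipate is bookkeeping the indirect term correctly in the multi-agent setting. Unlike the single-agent derivation of \cite{ziebart2010modeling}, a perturbation of $\pi_i^\tau$ moves the trajectory distribution through a transition kernel governed by the \emph{joint} action, so I must justify that the entire tail of future entropy and feature contributions collapses into the single scalar $\mathbb{E}_{\pi_{-i}^\tau}[\log Z_i(\tau+1)]$ and, in particular, that the stationarity system genuinely decouples into per-agent soft best responses rather than coupling agent $i$'s optimality to cross terms $\partial H_j/\partial\pi_i^\tau$ for $j\neq i$. The cleanest way to discharge this is to make the side-information reduction of the first paragraph rigorous — treating the causally revealed opponent behavior as exogenous when optimizing $\pi_i^\tau$ — after which the single-agent MCE recursion applies verbatim; uniqueness and feasibility (normalization and nonnegativity) then follow from the concavity of the program and the manifestly positive, self-normalizing exponential form.
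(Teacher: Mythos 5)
Your proposal follows essentially the same route as the paper: attach multipliers $\theta_i$ to the feature-matching constraints, write down Lagrangian stationarity in $\pi_i^\tau(\cdot|\vec{x})$, and unroll backward in time; your variational identity $\max_\pi\{\mathbb{E}_\pi[W]+H(\pi)\}=\softmax_a W(a)$ is a clean repackaging of what the paper does by explicitly differentiating the trajectory probability $P(\vec{a}^\tau,\vec{x}^\tau)$ with respect to $\pi(a_i(t)|\cdot)$ and observing that the derivative factors into the past marginal times $\pi_{-i}(t)$ times the future kernel, which is exactly what collapses the tail into $\mathbb{E}_{\pi_{-i}^\tau}[\log Z_i(\tau+1)]$. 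The obstacle you flag (the indirect term and the cross terms $\partial H_j/\partial\pi_i^\tau$) is the right one, and the paper discharges it by that explicit computation rather than by a formal side-information reduction.

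One claim in your first paragraph does not hold and should be dropped or reworked: the program \eqref{irl_max} is \emph{not} a concave maximization over the collection $\{\pi_i^\tau\}$, and the feature-matching constraints are \emph{not} linear in the policies. Both the causal entropy and the constraint $\mathbb{E}_{a_i,\vec{x}}[F_i]=\widehat{\mathbb{E}}_{a_i,\vec{x}}[F_i]$ involve expectations under the trajectory distribution induced by the product $\prod_j\pi_j^\tau$, so they are multilinear in the per-agent, per-step conditionals; the single-agent concavity argument of \cite{ziebart2010modeling} (concavity in the joint causally conditioned distribution) does not survive the factorization constraint \eqref{ind}. Consequently KKT stationarity is only necessary, the recursion you derive is a \emph{coupled} fixed-point system (each $W_i^\tau$ depends on $\pi_{-i}^\tau$), and neither global optimality nor uniqueness follows from strict concavity of the Shannon entropy on each simplex. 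This does not damage the derivation of the stated recursion --- the paper's own proof likewise establishes only the stationarity system and stops there --- but the uniqueness and ``global optimizer'' assertions are an overreach and should be removed or replaced by an explicit statement that the theorem characterizes stationary points.
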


\begin{theorem}\label{thm:gra}
The gradient of the dual with respect to $\theta_i$ is given by
\begin{align*}
\tilde{\mathbb{E}}[F_i(\vec{x},a_i)]-{\mathbb{E}}[F_i(\vec{x},a_i)].
\end{align*}
\end{theorem}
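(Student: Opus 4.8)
The plan is to derive the result from Lagrangian duality for the constrained maximum-causal-entropy program \eqref{irl_max}, treating $\theta=(\theta_1,\dots,\theta_M)$ as the vector of multipliers attached to the feature-matching constraints and then invoking the envelope theorem. First I would form the Lagrangian
\[
L(\{\pi_i^\tau\},\theta)=H(\vec{a}\|\vec{x})+\sum_{i}\big\langle\theta_i,\ \widehat{\mathbb{E}}_{a_i,\vec{x}}[F_i(\vec{x},a_i)]-\mathbb{E}_{a_i,\vec{x}}[F_i(\vec{x},a_i)]\big\rangle,
\]
keeping the normalization constraints explicit and noting that the nonnegativity constraints are inactive, since the entropy term forces every optimal policy to be strictly positive (indeed of the Boltzmann form in Theorem~\ref{thm:G}). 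The dual function is $g(\theta)=\max_{\{\pi_i^\tau\}}L(\{\pi_i^\tau\},\theta)$, whose maximizer $\{\pi_i^{\tau,\star}(\theta)\}$ is exactly the recursively-defined softmax policy of Theorem~\ref{thm:G}, with $\log Z_i(\tau)$ serving as the soft value function.

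Next I would apply the envelope (Danskin) theorem. Because $\{\pi_i^{\tau,\star}(\theta)\}$ is an interior stationary point of $L$ on the product of probability simplices — interior thanks to strict positivity — the perturbation $\partial\pi^\star/\partial\theta$ stays tangent to the normalization manifold and is therefore annihilated by $\partial L/\partial\pi$ at the optimum, so the implicit dependence on $\theta$ contributes nothing to the total derivative. Hence
\[
\nabla_{\theta_i} g(\theta)=\frac{\partial L}{\partial\theta_i}\Big|_{\{\pi\}=\{\pi^\star(\theta)\}}=\widehat{\mathbb{E}}_{a_i,\vec{x}}[F_i(\vec{x},a_i)]-\mathbb{E}_{a_i,\vec{x}}[F_i(\vec{x},a_i)],
\]
where the model expectation $\mathbb{E}[\cdot]$ is taken under the trajectory distribution induced by the joint policy $\prod_j\pi_j^{\tau,\star}$. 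This is precisely the claimed gradient, and it matches the update direction $\widetilde{F}-\bar{F}$ used in Algorithm~\ref{alg:3}.

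As an independent check that sidesteps Danskin, I would compute $g(\theta)$ in closed form. Substituting the optimal policy into $L$ collapses the entropy and the linear term into the log-partition potentials, giving $g(\theta)=\sum_i\big(\langle\theta_i,\widehat{\mathbb{E}}[F_i]\rangle-\mathbb{E}_{\vec{x}(0)}[\log Z_i(0)]\big)$ up to terms independent of $\theta_i$. Then I would differentiate $\mathbb{E}_{\vec{x}(0)}[\log Z_i(0)]$ by backward induction on $\tau$: since $\softmax_{a'}f(a')=\log\sum_{a'}\exp f(a')$ has gradient equal to the softmax-weighted — hence $\pi_i^\tau$-weighted — average of $\nabla f$, differentiating the recursion $\log Z_i(\tau)=\softmax_{a'}W_i^\tau(\vec{x}(\tau),a')$ emits a factor $F_i$ at each level and reweights the future levels by the induced transition-and-policy kernel. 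Unrolling from $\tau=T$ down to $\tau=0$ then yields $\nabla_{\theta_i}\mathbb{E}_{\vec{x}(0)}[\log Z_i(0)]=\mathbb{E}[F_i]$, reproducing the same expression.

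The main obstacle I anticipate is the bookkeeping in this backward induction in the multi-agent setting: the soft value $W_i^\tau$ couples to the opponents through $\mathbb{E}_{\pi_{-i}^\tau}[\log Z_i(\tau+1)]$, so I must verify that differentiating $\log Z_i(0)$ with respect to $\theta_i$ reconstructs exactly the causal state-action visitation distribution of agent $i$ and that the opponents' reweighting terms assemble into the correct joint expectation $\mathbb{E}[F_i]$. The remaining technical points — justifying the interchange of gradient and the finite expectation, and confirming the strict positivity that keeps the nonnegativity constraints inactive — are routine given the finite horizon and finite action set.
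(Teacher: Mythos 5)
Your proposal is correct, and your primary route differs from the paper's in a meaningful way. The paper does not invoke the envelope/Danskin theorem; instead it verifies the envelope property by hand: it telescopes the causal entropy at the optimal policy into $H=-\sum_{t}\sum_i\mathbb{E}[\theta_i^TF_i]+\sum_i\mathbb{E}[\softmax_a W_i^1]$, proves the identity $\mathbb{E}\big[\sum_a\Phi\,\nabla_{\theta_j}\pi_k(a|\cdot)\big]=0$ (which is exactly your ``tangential perturbations are annihilated by stationarity'' step, made explicit), and then unrolls a backward recursion to show $\nabla_{\theta_j}\mathbb{E}[\softmax_a W_j^1]=\sum_t\mathbb{E}[F_j]$ and $\nabla_{\theta_j}\mathbb{E}[\softmax_a W_i^1]=0$ for $i\neq j$, so that the total derivative of $H$ vanishes and only the explicit derivative of the multiplier term survives. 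Your ``independent check'' -- writing the dual in closed form as $\sum_i(\langle\theta_i,\widehat{\mathbb{E}}[F_i]\rangle-\mathbb{E}[\log Z_i(0)])$ up to sign and differentiating $\mathbb{E}[\log Z_i(0)]$ by backward induction -- is essentially the paper's proof, since $\mathbb{E}[\softmax_a W_i^1]=\mathbb{E}[\log Z_i(1)]$. What each approach buys: your Danskin argument is shorter and isolates the one fact that matters (interiority plus first-order stationarity kills the implicit $\theta$-dependence), whereas the paper's explicit computation produces the closed form of the dual and concretely verifies the cancellation, including the cross-agent terms $\nabla_{\theta_j}\mathbb{E}[\log Z_i(1)]=0$ for $i\neq j$ that you correctly flag as the main bookkeeping risk. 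One caution if you keep the Danskin framing: the inner maximization over \emph{product} policies $\prod_i\pi_i$ is not a jointly concave program, so the textbook hypotheses of Danskin's theorem are not literally satisfied; both your argument and the paper's really rest on differentiability and first-order stationarity of the recursive solution of Theorem \ref{thm:G}, so it is safer to phrase the step as an envelope-theorem argument at a stationary point rather than as convex duality. (The sign of the final expression depends only on the sign convention used when attaching the multipliers, an ambiguity already present between the two statements of the theorem in the paper.)
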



In order to establish the above results, we require the following results. For simplicity, in the remaining, we use $||\cdot||$ to denote $||\cdot||_\infty$.

\begin{lemma}\label{lemma2}
Let $X=[x_1,...,x_n]\in\mathbb{R}^n$. There exists $0<\alpha<1$, such that
\begin{align}\label{eqlemma}
\frac{1}{2}\sum_{j,i}\lambda_i\lambda_j|e^{x_i}-e^{x_j}|\leq \alpha||X||\left(\sum_k \lambda_k e^{x_k}\right),
\end{align}
where $0<\lambda_i<1$ and $\sum_k \lambda_k=1$.
\end{lemma}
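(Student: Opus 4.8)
The plan is to reduce the whole estimate to a single scalar inequality and then bound the double sum term by term, exploiting strictness to push the constant below $1$. The first step is the pointwise bound
\[
|e^{a}-e^{b}|\le \frac{|a-b|}{2}\bigl(e^{a}+e^{b}\bigr),\qquad a,b\in\mathbb{R},
\]
with strict inequality whenever $a\neq b$. Dividing by $e^{(a+b)/2}>0$ and setting $d=|a-b|$, this is exactly $2\sinh(d/2)\le d\cosh(d/2)$, i.e. $\tanh(d/2)\le d/2$, which is the elementary inequality $\tanh u\le u$ for $u\ge 0$ (strict for $u>0$, with equality only at $u=0$, i.e. $a=b$). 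Note that the sharper coefficient $\tfrac12(e^{a}+e^{b})$ rather than the crude $\max(e^{a},e^{b})\le e^{a}+e^{b}$ is essential: the latter would cost an extra factor of $2$ and land at $\alpha=2$, whereas the above will land exactly at the boundary $\alpha=1$, from which strictness can be harvested.

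Next I would apply this to each pair $a=x_i$, $b=x_j$, use $\tfrac{|x_i-x_j|}{2}\le \|X\|=\|X\|_\infty$, and collapse the double sum by symmetry. Writing $S:=\sum_k\lambda_k e^{x_k}>0$ and using $\sum_{i,j}\lambda_i\lambda_j(e^{x_i}+e^{x_j})=2S$ (since $\sum_k\lambda_k=1$), I obtain
\[
\frac12\sum_{i,j}\lambda_i\lambda_j|e^{x_i}-e^{x_j}|\;\le\;\frac{\|X\|}{2}\sum_{i,j}\lambda_i\lambda_j\bigl(e^{x_i}+e^{x_j}\bigr)\;=\;\|X\|\,S.
\]
This already gives the inequality with $\alpha=1$, so the entire content of the lemma is in turning this into a strict inequality for a nontrivial $X$.

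For the strict improvement I would argue as follows. If $X$ is constant then the left-hand side is $0$ and the claim holds for any $\alpha\in(0,1)$. If $X$ is not constant, then since every weight satisfies $\lambda_k>0$ there is a pair $(i,j)$ with $\lambda_i\lambda_j>0$ and $x_i\neq x_j$; for this pair the pointwise inequality of the first step is strict, while all remaining terms obey the non-strict bound, so summation yields the strict inequality $\tfrac12\sum_{i,j}\lambda_i\lambda_j|e^{x_i}-e^{x_j}|<\|X\|\,S$ (here $\|X\|>0$ because two distinct entries cannot both vanish). Setting $\alpha:=\bigl(\tfrac12\sum_{i,j}\lambda_i\lambda_j|e^{x_i}-e^{x_j}|\bigr)/(\|X\|\,S)$ then gives a value in $(0,1)$, as required.

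The delicate point, and the only real obstacle, is precisely this passage from $\le 1$ to $<1$: the coefficient $\tfrac{|x_i-x_j|}{2}\le\|X\|$ is tight exactly for antipodal pairs $x_i=-x_j=\pm\|X\|$, which is also where the pointwise $\tanh$-inequality is most slack, so the two estimates are never simultaneously tight and strictness is genuinely available. I would flag that if a single constant $\alpha$ uniform over all $X$ and all $\lambda$ were wanted (as in a contraction argument), the optimal such constant is $\tfrac12$, approached as $\|X\|\to 0$ with mass split evenly between $\pm\|X\|$; establishing that sharp uniform bound is strictly harder and would instead use the cumulative representation $\tfrac12\sum_{i<j}\lambda_i\lambda_j|e^{x_i}-e^{x_j}|=\int_{x_{(1)}}^{x_{(n)}} e^{s}L(s)\bigl(1-L(s)\bigr)\,ds$, where $L$ is the $\lambda$-weighted cumulative distribution of the $x_k$ and $L(1-L)\le\tfrac14$ supplies the missing factor, together with a careful treatment of the weighting by $S$.
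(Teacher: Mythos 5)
Your proof is correct, and it takes a genuinely different and considerably more direct route than the paper's. The paper argues by induction on $n$: it verifies $n=2$ by hand (reducing to the nonnegativity of an auxiliary function $h_\alpha$ for $\alpha>(1-\lambda)\max\{1,2\lambda\}$) and then passes from $n$ to $n+1$ by splitting off the entry of largest modulus, with a two-case analysis on its sign and further auxiliary functions. You instead reduce everything to the single scalar inequality $|e^a-e^b|\le\tfrac{|a-b|}{2}(e^a+e^b)$ (i.e.\ $\tanh u\le u$), combine it with $|x_i-x_j|\le 2\|X\|_\infty$, and symmetrize to get the bound with constant $1$; the strictness needed for $\alpha<1$ is then harvested from the strictness of $\tanh u<u$ at any pair with $x_i\neq x_j$ (using $\lambda_i\lambda_j>0$ and $\|X\|>0$), and the constant-$X$ case is trivial. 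This avoids the induction entirely and is easier to verify. The one substantive caveat is common to both proofs, and you identify it explicitly: the $\alpha$ obtained depends on $X$ and $\lambda$ (the paper's base-case $\alpha$ likewise tends to $1$ as $\lambda\to 0$ or $\lambda\to 1$), which matches the literal statement but not its downstream use, where a single uniform $\alpha<1$ is needed for the contraction arguments; your observation that the sharp uniform constant is $\tfrac12$ is the right repair, and in fact it already follows from the paper's own later lemma on $f_\alpha(Q)=\sum_j \alpha_j e^{\beta q_j}/\sum_i e^{\beta q_i}$ by taking $\alpha$ to range over indicator vectors, which would strengthen the softmax--total-variation bound to $\beta\|Q-\tilde Q\|_\infty$. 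Two small slips in your closing aside, neither affecting the main argument: the cumulative representation should read $\sum_{i<j}\lambda_i\lambda_j|e^{x_i}-e^{x_j}|=\int e^{s}L(s)(1-L(s))\,ds$ without the leading $\tfrac12$, and the claim that $\tfrac12$ is the exact supremum over all $n$ and all weights is asserted rather than proved.
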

\begin{proof}

We prove this by induction on $n$. First step is when $n=2$. In this case, the left hand side of (\ref{eqlemma}) becomes $
\lambda(1-\lambda)|e^{x_2}-e^{x_1}|$. Without loss of generality, assume $x_2\geq x_1$. We will show that there exists an $\alpha$ such that the function below is positive for $x_2-x_1=x\geq0$,
\begin{align*}
g_\alpha(x):=2\alpha \max\{|x_1+x|,|x_1|\}\left(\lambda e^{x}+(1-\lambda)\right)-2\lambda(1-\lambda)(e^{x}-1).
\end{align*}
If 
Since $2\max\{|x_1+x|,|x_1|\}\geq |x|$, we can show that $g_\alpha(x)$ is positive, if $h_\alpha(x)$ is positive,
\begin{align*}
h_\alpha(x):=\alpha x\left(\lambda e^{x}+(1-\lambda)\right)-2\lambda(1-\lambda)(e^{x}-1).
\end{align*}
This is straightforward, because for $1>\alpha>(1-\lambda)\max\{1, 2\lambda\}$, we have $h_\alpha(0)=0$, $\partial h_\alpha(0)/\partial x>0$, and $\partial^2 h_\alpha(x)/\partial x^2\geq0$.

Induction hypothesis; for any vector $Z\in\mathbb{R}^n$, there exists an $\alpha$ such that (\ref{eqlemma}) holds. Let $X\in\mathbb{R}^{n+1}$ such that $|x_{n+1}|=||X||$, and define $x_{n+1}:=x^*$. By induction hypothesis, we have
\begin{align*}
\sum_{j,i\leq n}\lambda_i\lambda_j|e^{y_i}-e^{y_j}|\leq& 2(1-\lambda)\alpha^*||Y||\sum_{k\leq n} \lambda_k e^{y_k},
\end{align*}
where $Y:=[x_{n},...,x_1]\in\mathbb{R}^n$, in which the zero entry is removed. Therefore, to prove the $n+1$st step, we need to show that
\begin{align}\label{ine}
&2(1-\lambda)\alpha^*||Y||\sum_{k\leq n} \lambda_k e^{y_k}+2\lambda\sum_{i\leq n}\lambda_i|e^{y_i}-e^{x^*}|\leq 2\alpha||X||(\sum_{k\leq n} \lambda_k e^{y_k}+\lambda e^*).
\end{align}
Let $1>\alpha\geq\alpha^*$, and note that $||Y||\leq||X||=|x^*|$. Therefore, we can prove (\ref{ine}) by showing 
\begin{align}\label{ine2}
&0\leq \alpha |x^*|(\sum_{k\leq n} \tilde{\lambda}_k e^{y_k}+\frac{e^{x^*}}{1-\lambda})-\sum_{i\leq n}\tilde{\lambda}_i|e^{y_i}-e^{x^*}|.
\end{align}
where $\tilde{\lambda}_i:=\frac{\lambda_k}{1-\lambda}$. Note that $\sum_k\tilde{\lambda}_k=1$. 
Without loss of generality, let $y_1\leq...\leq y_n$. Since $|x^*|\geq ||Y||$, there are only two possible scenarios: I) $x^*\geq0$, then $-x^*\leq y_1\leq...\leq y_n\leq x^*$ or II) $x^*<0$ in which, we have $x^*\leq y_1\leq...\leq y_n\leq -x^*$.

I) In this case, the right hand side of (\ref{ine2}) can be bounded by
\begin{align*}
&(\alpha x^*+1)\sum_{k\leq n} \tilde{\lambda}_k e^{y_k}+\alpha x^*\frac{e^{x^*}}{1-\lambda}-e^{x^*}\geq (\alpha x^*+1)e^{-x^*}+\alpha x^*\frac{e^{x^*}}{1-\lambda}-e^{x^*}:=J_\alpha(x^*).
\end{align*}
We show that there exists $0<\alpha<1$ such that for all $x^*>0$, $e^{x^*}J_\alpha(x^*)\geq0$. 
Because $J_\alpha(0)=0$, $\partial e^{x^*}J_\alpha(x^*)/\partial x^*$ is $\alpha+\alpha/(1-\lambda)-2$ for $x^*=0$, and 
$$
\frac{\partial^2 e^{x^*}J_\alpha(x^*)}{\partial (x^*)^2}\Big|_{x^*=0}=\left(4\frac{\alpha}{1-\lambda}-4+\frac{4\alpha x^*}{1-\lambda}\right)e^{2x^*},
$$
selecting $1>\alpha\geq\max\{1-\lambda,\frac{2-2\lambda}{2-\lambda}\}$, we have $J_\alpha(x^*)\geq0$.

II) Suppose $x^*=-x<0$. In this case, the right hand side of (\ref{ine2}) becomes
\begin{align*}
&(\alpha x-1)\sum_{k\leq n} \tilde{\lambda}_k e^{y_k}+\alpha x^*\frac{e^{-x}}{1-\lambda}+e^{-x}.
\end{align*}
Clearly, the above equation is positive if $\alpha x-1>0$, otherwise, it is bounded by the following function 
$$
h_\alpha(x):=(\alpha x-1)e^{x}+\frac{\alpha x}{1-\lambda}e^{-x}+e^{-x}.
$$
We show there exists an $\alpha<1$ such that $r_\alpha(x):=h_\alpha(x)e^x$ is positive for all $\alpha x<1$. This is true because of the following facts 
\begin{align*}
&r_\alpha(0)=0,\ \ \ \ \frac{\partial r_\alpha(x)}{\partial x}\big|_{x=0}=\alpha+\frac{\alpha}{1-\lambda}-2,\\
&\min_x\frac{\partial r_\alpha(x)}{\partial x}= -\alpha e^{\frac{2(1-\alpha)}{\alpha}}+\frac{\alpha}{1-\lambda}.
\end{align*}

Since $\min_x\frac{\partial r_\alpha(x)}{\partial x}$ is a monotone increasing function of $\alpha$ and it is positive for $\alpha=1$, there exists $\bar{\alpha}<1$, such that $\min_x \partial r_{\bar{\alpha}}(x)/\partial x \geq0$. 
Hence, $r_{\bar{\alpha}}(x)\geq0$ for $\alpha x<1$. 
 This concludes the results. 
\end{proof}

\begin{lemma}\label{lemma0}
Let $Q, \tilde{Q}\in\mathbb{R}^n$, then there exists an $0<\alpha<1$, such that 
\begin{align}
||Exp_\beta(Q)-Exp_\beta(\tilde{Q})||_1\leq 2\beta\alpha||Q-\tilde{Q}||_\infty,
\end{align}
where $Exp_\beta([q_1,...,q_n]):=[e^{\beta q_1},...,e^{\beta q_n}]/(\sum_i e^{\beta q_i})$.
\end{lemma}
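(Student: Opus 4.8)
The plan is to reduce the whole estimate to Lemma \ref{lemma2} by an algebraic rewriting of the difference of the two Boltzmann vectors. Write the $i$-th components as $p_i := e^{\beta q_i}/Z$ and $\tilde p_i := e^{\beta\tilde q_i}/\tilde Z$, with $Z=\sum_k e^{\beta q_k}$, $\tilde Z=\sum_k e^{\beta\tilde q_k}$, and set $d_i := q_i-\tilde q_i$. The first step is to express each $p_i$ relative to $\tilde p_i$. Since $e^{\beta q_i}=e^{\beta\tilde q_i}e^{\beta d_i}$, one obtains $Z=\tilde Z\,E$ with $E:=\sum_k \tilde p_k e^{\beta d_k}=Z/\tilde Z$, and hence $p_i=\tilde p_i\,e^{\beta d_i}/E$. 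This is the key manipulation: it turns the perturbation of a softmax into a reweighting of the base distribution $\tilde p$ by the exponentials of the coordinate-wise differences $d_i$.

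Next I would subtract and use $E=\sum_k \tilde p_k e^{\beta d_k}$ to produce the telescoped form
\[
p_i-\tilde p_i=\frac{\tilde p_i}{E}\bigl(e^{\beta d_i}-E\bigr)=\frac{\tilde p_i}{E}\sum_k \tilde p_k\bigl(e^{\beta d_i}-e^{\beta d_k}\bigr).
\]
Summing over $i$ and applying the triangle inequality then gives
\[
\|Exp_\beta(Q)-Exp_\beta(\tilde Q)\|_1 \le \frac{1}{E}\sum_{i,k}\tilde p_i\tilde p_k\,\bigl|e^{\beta d_i}-e^{\beta d_k}\bigr|.
\]
At this point the right-hand side is exactly the quantity controlled by Lemma \ref{lemma2}, with weights $\lambda_i=\tilde p_i$ (which are strictly positive and sum to one, as required) and vector $X=\beta d$.

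Invoking Lemma \ref{lemma2} yields $\tfrac12\sum_{i,k}\tilde p_i\tilde p_k|e^{\beta d_i}-e^{\beta d_k}|\le \alpha\|\beta d\|_\infty\sum_k\tilde p_k e^{\beta d_k}=\alpha\beta\|d\|_\infty E$ for some $0<\alpha<1$. Dividing through by $E$ cancels it and leaves precisely $\|Exp_\beta(Q)-Exp_\beta(\tilde Q)\|_1\le 2\beta\alpha\|Q-\tilde Q\|_\infty$, which is the claim.

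I expect the main obstacle to be the first step, namely spotting the rewriting $p_i=\tilde p_i e^{\beta d_i}/E$ together with the identity $e^{\beta d_i}-E=\sum_k \tilde p_k(e^{\beta d_i}-e^{\beta d_k})$: this is exactly what exposes the symmetric double-sum structure that Lemma \ref{lemma2} was designed to bound, and once it is in place the remainder is mechanical. A secondary point worth verifying is that the constant $\alpha$ supplied by Lemma \ref{lemma2} indeed lies in $(0,1)$ for the weights $\tilde p$ (guaranteed since each $\tilde p_i\in(0,1)$) and that $E>0$, so that the division by $E$ is legitimate.
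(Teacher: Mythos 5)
Your proposal is correct and follows essentially the same route as the paper: both rewrite the difference of the two Boltzmann vectors as a symmetric double sum $\frac{1}{E}\sum_{i,k}\lambda_i\lambda_k\,|e^{\beta d_i}-e^{\beta d_k}|$ with softmax weights $\lambda$ (the paper uses the weights of $Q$ and the shifts $x=\tilde Q-Q$, you use the weights of $\tilde Q$ and $d=Q-\tilde Q$, which is the same computation up to relabeling), and then invoke Lemma~\ref{lemma2} and cancel the normalizer $E$. The only difference is cosmetic, and your explicit checks that $\tilde p_i\in(0,1)$ and $E>0$ are sound.
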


\begin{proof}
Suppose that $\tilde{Q}=Q+X$, 
\begin{align*}
&||Exp_\beta(Q)-Exp_\beta(\tilde{Q})||_1=\sum_i\left|\frac{e^{\beta q_i}}{\sum_k e^{\beta q_k}}-\frac{e^{\beta q_i+\beta x_i}}{\sum_k e^{\beta q_k+\beta x_k}}\right|=\sum_i \frac{e^{\beta q_i}}{\sum_k e^{\beta q_k}}\left|1-\frac{e^{\beta x_i}}{\sum_k \frac{e^{\beta q_k}}{\sum_j e^{\beta q_j}}e^{\beta x_k}}\right|\\
&=\sum_i \lambda_i\left|1-\frac{e^{\beta x_i}}{\sum_k \lambda_k e^{\beta x_k}}\right|\leq \frac{1}{{\sum_k \lambda_j e^{\beta x_j}}}\sum_{i,k} \lambda_i\lambda_k\left|e^{\beta x_i}-e^{\beta x_k}\right|,
\end{align*}
where $\lambda_i:=\frac{e^{\beta q_i}}{\sum_k e^{\beta q_k}}$. Applying Lemma \ref{lemma2} will imply the result.
\end{proof}

\begin{lemma}\label{lemma4}
For two probability measures $u$ and $v$ defined over a countable space, we have
\begin{align*}
\sup_{||f||\leq c}\Big|\mathbb{E}_u[f]-\mathbb{E}_v[f]\Big|\leq c||u-v||_1.
\end{align*}
\end{lemma}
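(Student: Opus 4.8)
The plan is to recognize Lemma \ref{lemma4} as the standard $\ell^1$--$\ell^\infty$ duality, i.e.\ a one-line consequence of Hölder's inequality applied to the difference of the two probability vectors. Indexing the countable space by $k$, I would expand both expectations as series $\mathbb{E}_u[f]=\sum_k u_k f_k$ and $\mathbb{E}_v[f]=\sum_k v_k f_k$. These converge absolutely: since $|f_k|\leq \|f\|\leq c$ and $u,v$ are probability measures, one has $\sum_k |u_k f_k|\leq c$ and likewise for $v$, so all subsequent rearrangements are justified.

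First I would combine the two sums into the single absolutely convergent series $\sum_k (u_k-v_k)f_k$, then apply the triangle inequality for series followed by the pointwise bound $|f_k|\leq c$:
\begin{align*}
\Big|\mathbb{E}_u[f]-\mathbb{E}_v[f]\Big| = \Big|\sum_k (u_k-v_k)f_k\Big| \leq \sum_k |u_k-v_k|\,|f_k| \leq c\sum_k |u_k-v_k| = c\,\|u-v\|_1.
\end{align*}
Since this bound holds for every admissible $f$ and the right-hand side is independent of $f$, taking the supremum over all $f$ with $\|f\|\leq c$ preserves the inequality and yields the claim.

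There is no substantive obstacle here; the only points deserving a word of care are (i) justifying the series manipulations on a countable (possibly infinite) index set, which the uniform bound $|f_k|\leq c$ together with the summability of $u$ and $v$ settles, and (ii) observing that the supremum is in fact attained in the limit by the choice $f_k = c\,\mathrm{sgn}(u_k-v_k)$, so the bound is tight — although tightness is not required for the statement. With these observations the result is immediate.
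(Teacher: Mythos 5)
Your proof is correct: the series expansion, triangle inequality, and the uniform bound $|f_k|\leq c$ give exactly the claimed $\ell^1$--$\ell^\infty$ duality estimate, and the absolute-convergence remark disposes of the only measure-theoretic subtlety on a countable space. The paper states Lemma~\ref{lemma4} without any proof (it is invoked as a standard fact in the proofs of Theorems~\ref{thmma} and~\ref{thmm2}), so there is nothing to compare against; your argument is the standard one the authors are implicitly relying on, and your closing observation that $f_k=c\,\mathrm{sgn}(u_k-v_k)$ attains the bound correctly shows the inequality is tight, though that is not needed.
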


\subsection{Proof of Theorem \ref{thmma}}\label{thmmapr}
To conclude the result, first, we show that for any fixed $Q_{i}$ there exists $0\leq \alpha<1$ independent of $Q_{i}$, such that 
\begin{align}\label{eq:in10}
&|T_i(Q_{-i},Q_i)-T_i(\tilde{Q}_{-i},Q_i)|\leq \frac{B_i\gamma}{1-\gamma}2\alpha\beta(M-1)\max_{j\in-i}||Q_{j}-\tilde{Q}_{j}||.
\end{align}
Second, we show that for any fixed $Q_{-i}$,
\begin{align}\label{eq:in2}
&|T_i(Q_{-i},\tilde{Q}_i)-T_i(Q_{-i},\tilde{Q}_i)|\leq (\gamma+\frac{B_i\gamma}{1-\gamma}2\alpha\beta)||Q_{i}-\tilde{Q}_{i}||.
\end{align}
To show \eqref{eq:in10}, we start with the definition of $T_i$ that is 
\begin{align}\label{er01}
&T_i(Q_{-i},Q_i):= R_i\!+\!\gamma\mathbb{E}_{Exp_\beta\{Q_{-i}\}}\big[V_i\big],
\end{align}
where $V_i=\mathbb{E}_{Exp_\beta\{Q_i\}}[Q_i]$ and the expectation in \eqref{er01} is taken with respect to the policies of players $-i$. The left hand side of \eqref{eq:in10} can be written as follows,
\begin{align*}
|T_i(Q_{-i},Q_i)-T_i(\tilde{Q}_{-i},Q_i)|=\gamma\Big|\mathbb{E}_{Exp_\beta\{Q_{-i}\}}[V_i]-\mathbb{E}_{Exp_\beta\{\tilde{Q}_{-i}\}}[V_i]\Big|.
\end{align*}
Notice that if $||R_i||\leq B_i$ for some $B_i$, then $||V_i||\leq B_i(1+\gamma+\cdots)=B_i/(1-\gamma)$. 
Using the result of Lemma \ref{lemma4}, the right hand side of the above equation can be bounded as follows 
\begin{align}\label{F1}
&\gamma\Big|\mathbb{E}_{Exp_\beta\{Q_{-i}\}}[V_i]-\mathbb{E}_{Exp_\beta\{\tilde{Q}_{-i}\}}[V_i]\Big|\leq \frac{B_i\gamma}{1-\gamma}||Exp_\beta\{Q_{-i}\}-Exp_\beta\{\tilde{Q}_{-i}\}||_1,
\end{align}
 Given the result of Lemma \ref{lemma0}, there exists an $0<\alpha<1$ such that the right hand side of the above inequality can be bounded by
\begin{align*}
\frac{B_i\gamma}{1-\gamma}2\alpha\beta||Q_{-i}-\tilde{Q}_{-i}||.
\end{align*}
Note that $Q_{-i}=\bigoplus_{j\in-i}Q_j$, where $\vec{u}\bigoplus\vec{v}=[u_n+v_m]$ for $1\leq n\leq |\vec{u}|$ and $1\leq m\leq |\vec{v}|$.
Therefore, 
$$
||Q_{-i}-\tilde{Q}_{-i}||\leq (M-1)\max_{j\in-i}||Q_j-\tilde{Q}_j||.
$$
This concludes the inequality in \eqref{eq:in10}.

In order to prove \eqref{eq:in2}, we have
\begin{align*}
&|T_i(Q_{-i},Q_i)-T_i(Q_{-i},\tilde{Q}_i)|=\gamma\Big|\mathbb{E}_{Exp_\beta(Q_i)}\mathbb{E}_{Exp_\beta\{Q_{-i}\}}[Q_i]-\mathbb{E}_{Exp_\beta(\tilde{Q}_i)}\mathbb{E}_{Exp_\beta\{Q_{-i}\}}[\tilde{Q}_i]\Big|\\
&\leq \gamma\Big|\mathbb{E}_{Exp_\beta(Q_i)}\mathbb{E}_{Exp_\beta\{Q_{-i}\}}[Q_i]-\mathbb{E}_{Exp_\beta(Q_i)}\mathbb{E}_{Exp_\beta\{Q_{-i}\}}[\tilde{Q}_i]\Big|+\gamma\Big|\mathbb{E}_{Exp_\beta(Q_i)}[\phi]-\mathbb{E}_{Exp_\beta(\tilde{Q}_i)}[\phi]\Big|\\
&\leq \gamma||Q_i-\tilde{Q}_i||+\frac{B_i\gamma}{1-\gamma}2\alpha\beta||Q_i-\tilde{Q}_i||,
\end{align*}
where $\phi:= \mathbb{E}_{Exp_\beta\{Q_{-i}\}}[\tilde{Q}_i]$. The last inequality is obtained similar to the proof of \eqref{eq:in10} and the fact that $\mathbb{E}[g]\leq||g||$.

Final step is to combine \eqref{eq:in10} and \eqref{eq:in2}. 
\begin{align*}
&|T_i(Q_{-i},{Q}_i)-T_i(\tilde{Q}_{-i},\tilde{Q}_i)|\leq |T_i(Q_{-i},{Q}_i)-T_i({Q}_{-i},\tilde{Q}_i)|+|T_i(Q_{-i},\tilde{Q}_i)-T_i(\tilde{Q}_{-i},\tilde{Q}_i)|\\
&\leq (\gamma+\frac{B_i\gamma}{1-\gamma}2\alpha\beta M)||Q-\tilde{Q}||.
\end{align*}
In the above equation $||Q-\tilde{Q}||=\max_k ||Q_k-\tilde{Q}_k||$. This shows that $T_i$ is contraction if $B_i<\frac{(1-\gamma)^2}{2M\gamma\alpha\beta}$ in the infinite-time horizon problem.

\subsection{Proof of Theorem \ref{thmm2}}\label{thmm2pr}
We prove the result by showing that, if $||R_{i,F}||, ||R_i||$ are bounded, there exists $0\leq \alpha<1$, such that
\begin{align*}
&|U_i(Q^t_{-i},V^{t+1}_i)-U_i(\tilde{Q}^t_{-i},V^{t+1}_i)|\leq C_i(T+1)2\alpha\beta(M-1)\max_{j\in-i}||Q_{j}-\tilde{Q}_{j}||_\infty.
\end{align*}
This inequality is analogous to \eqref{eq:in10} and can be shown similarly by applying Lemma \ref{lemma0} and Lemma \ref{lemma4} and keeping in mind that if $||R_i||\leq C_i$ for some $C_i$, then $||Q_i||\leq C_i(T+1)$.

\subsection{Proof of Theorem 3}
We prove this statement by showing that there exists an $\alpha\in [0,1)$ such that for any agent $i$ and any $t$,  
\begin{align}\label{eq:main}
&  ||Q^t_{i,k+1}- Q^t_{i,k}||_\infty \leq \alpha ||Q^t_{i,k}- Q^t_{i,k-1}||_\infty.
\end{align}
where $||Q- \tilde{Q}||_\infty:=\sup_{x}\sup_{a} |Q(x,a)-\tilde{Q}(x,a)|$.
As a reminder, we have
\begin{align}\label{sys1}
&Q^t_{i,k}=B_i(O^{t}_{-i,k-1},Q^{t+1}_{i,k})=R_i+\Psi(O^t_{-i, k-1})\!+ \mathbb{E}_{Exp(Q_{i,k}^{t+1})}[Q_{i,k}^{t+1}],\\ \label{sys2}
&O_{i,k}^t=G(Q_{i,k}^{t-1},O_{i,k}^{t-1})=\mathbb{E}_{Exp(Q_{j,k}^{t-1})}[O_{j,k}^{t-1}],
\end{align}
Inequality \eqref{eq:main} and the result of Lemma \ref{lemma1} conclude the result. 
To show \eqref{eq:main}, let
\begin{align*}
& \Delta_k^t:=||Q_k^t-Q_{k-1}^t||_\infty,\\ \notag
&\Lambda_k^t:=||O_k^t-O_{k-1}^t||_1,
\end{align*}
in which for simplicity, the index corresponding to the agent is dropped. 
Using the update equations in Algorithm 3, i.e., Equation \extref{backw}, and Lemma \ref{lemma:soft_lip}, we obtain
\begin{align}\label{eq:up1}
& \Delta_k^t \leq L \Lambda_{k-1}^t + S\Delta_k^{t+1},\\
&\Delta_k^T \leq L \Lambda_{k-1}^T,
\end{align}
where $S:=(1+\xi\beta)$ is given in Lemma \ref{lemma:soft_lip}.

On the other hand, using Equation \extref{forw}, we have
\begin{small}
\begin{align*}
& \Lambda_k^t=||G(Q_{k}^{t-1},O_{k}^{t-1})-G(Q_{k-1}^{t-1},O_{k-1}^{t-1})||_1\\
&\leq||G(Q_{k}^{t-1},O_{k}^{t-1})-G(Q_{k-1}^{t-1},O_{k}^{t-1})||_1+||G(Q_{k-1}^{t-1},O_{k}^{t-1})-G(Q_{k-1}^{t-1},O_{k-1}^{t-1})||_1.
\end{align*}
\end{small}
Given the results of Lemma \ref{lemma:in20}, we have
\begin{align}
&\Lambda_k^t\leq 2\beta\Delta_k^{t-1}+\Lambda_k^{t-1},\\ \label{eq:up2}
&\Lambda_k^0=0.
\end{align}
From the inequalities in \eqref{eq:up1}-\eqref{eq:up2}, we obtain
\begin{small}
\begin{align*}
&\Delta_k^t\leq L\left(\sum_{j=0}^{T-t}S^{j}\Lambda_{k-1}^{t+j}\right),\\
&\Lambda_{k-1}^{t}\leq 2\beta\left(\sum_{i=0}^{t-1}\Delta_{k-1}^{t-i-1}\right).
\end{align*}
\end{small}
Combining the above inequalities implies
\begin{small}
\begin{align*}
&\Delta_k^t\leq 2L\beta\sum_{j=0}^{T-t}S^{j}\sum_{i=0}^{t+j-1}\Delta_{k-1}^{t+j-i-1}\leq 2L\beta\Delta_{k-1}\sum_{j=0}^{T-t}\sum_{i=0}^{t+j-1}S^{j}\overset{(a)}{\leq}  2L\beta\Delta_{k-1}\sum_{j=1}^{T}\sum_{i=0}^{j-1}S^{j}\\
&=2L\beta\Delta_{k-1}S\frac{TS^{T+1}-(T+1)S^T+1}{(S-1)^2}\leq 2L\beta\Delta_{k-1}S\frac{S^{T}T}{(S-1)}\leq \Delta_{k-1}\frac{2LT}{\xi}e^{\beta(T+1)\xi}.
\end{align*}
\end{small}
where $\Delta_{k-1}:=\max_{0\leq\tau\leq T}\Delta_{k-1}^{\tau}$. Inequality $(a)$ is due to the fact that $S>1$.
To obtain convergence, we require the coefficient of $\Delta_{k-1}$ to be less than one. Therefore, we obtain
\begin{align*}
&2LT\leq \xi e^{-\beta(T+1)\xi}.
\end{align*}

\begin{lemma}\label{lemma:in20}
We have
\begin{small}
\begin{align*}
&||G(Q_{k}^{t-1},O_{k}^{t-1})-G(Q_{k-1}^{t-1},O_{k}^{t-1})||_1\leq 2\beta\Delta_k^{t-1},\\
&||G(Q_{k-1}^{t-1},O_{k}^{t-1})-G(Q_{k-1}^{t-1},O_{k-1}^{t-1})||_1\leq \Lambda_k^{t-1}.
\end{align*}
\end{small}
\end{lemma}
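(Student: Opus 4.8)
The plan is to prove both bounds by writing the operator $G$ explicitly as the one-step pushforward of an occupancy measure through the policy-weighted transition kernel, namely $G(Q,O)(x)=\sum_{x',a}Exp_\beta\{Q\}(a|x')P(x|x',a)O(x')$, and then exploiting the three normalizations $\sum_x P(x|x',a)=1$, $\sum_a Exp_\beta\{Q\}(a|x')=1$, and $\sum_{x'}O(x')=1$. The last of these holds because every occupancy measure is a probability distribution over $\mathcal{X}$: it equals $1$ on the initial $O^0=\delta$ and total mass is preserved by each application of $G$. In both inequalities only one of the two arguments of $G$ is changed, so after subtracting, the common factor drops out and the differenced operator splits cleanly.

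For the first inequality I would fix the occupancy measure $O=O_k^{t-1}$ and vary the Q-function. Since the kernel $P$ and the weight $O(x')$ are common to both terms, the triangle inequality for the $\ell_1$ norm gives $\|G(Q_k^{t-1},O)-G(Q_{k-1}^{t-1},O)\|_1\le\sum_{x',a}\bigl|Exp_\beta\{Q_k^{t-1}\}(a|x')-Exp_\beta\{Q_{k-1}^{t-1}\}(a|x')\bigr|\,O(x')\sum_x P(x|x',a)$. Summing over the target state $x$ first removes the kernel and leaves $\sum_{x'}O(x')\,\bigl\|Exp_\beta\{Q_k^{t-1}\}(\cdot|x')-Exp_\beta\{Q_{k-1}^{t-1}\}(\cdot|x')\bigr\|_1$. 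Applying Lemma~\ref{lemma0} pointwise in $x'$ bounds each inner norm by $2\beta\alpha\,\|Q_k^{t-1}(x',\cdot)-Q_{k-1}^{t-1}(x',\cdot)\|_\infty\le 2\beta\Delta_k^{t-1}$, using $\alpha<1$. Since $\sum_{x'}O(x')=1$, the claimed bound $2\beta\Delta_k^{t-1}$ follows.

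For the second inequality I would instead fix the Q-function (hence the policy $\pi=Exp_\beta\{Q_{k-1}^{t-1}\}$) and vary the occupancy measure. The differenced operator is $\sum_{x',a}\pi(a|x')P(x|x',a)\bigl[O_k^{t-1}(x')-O_{k-1}^{t-1}(x')\bigr]$, so the triangle inequality together with summing over $x$ and then $a$ first (using $\sum_x P(x|x',a)=1$ and $\sum_a\pi(a|x')=1$) collapses everything to $\sum_{x'}\bigl|O_k^{t-1}(x')-O_{k-1}^{t-1}(x')\bigr|=\Lambda_k^{t-1}$. This is simply the statement that a Markov pushforward is a non-expansion in the $\ell_1$ norm, so no additional ingredient is required.

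The only nonroutine step is the first inequality, whose essential input is the Lipschitz continuity of the Boltzmann map furnished by Lemma~\ref{lemma0}; everything else is standard stochastic-matrix bookkeeping. The one point I would verify carefully is that the occupancy measure has unit total mass, so that the weight $\sum_{x'}O(x')$ collapses to one in the final bound — this is immediate from $O^0=\delta$ and the mass-preservation property of $G$.
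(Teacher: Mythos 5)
Your proof is correct and follows essentially the same route as the paper's: expand $G$ as the policy-weighted pushforward, apply the triangle inequality, marginalize out the transition kernel (and, for the second bound, the policy), and invoke Lemma~\ref{lemma0} pointwise in $x'$ for the first bound while using the unit mass of $O_k^{t-1}$ to collapse the outer sum. The only cosmetic difference is that the paper writes the state integrals explicitly; your added remark on mass preservation of $G$ is a fair point the paper leaves implicit.
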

\begin{proof}
Using the definition of transformation $U$, the left hand side can be written as follows
\begin{small}
\begin{align}\label{eq:in1}
&\int_x\left|\int_{x'}\sum_a \left(\pi_k^{t-1}(a|x')-\pi_{k-1}^{t-1}(a|x')\right)P(x|x',a)O_k^{t-1}(x')dx'\right|dx \\ \notag
&\leq\int_x\int_{x'} \sum_a\left|\pi_k^{t-1}(a|x')-\pi_{k-1}^{t-1}(a|x')\right|P(x|x',a)O_k^{t-1}(x')dx'dx \\ \notag
&=\int_{x'}\sum_a \left|\pi_k^{t-1}(a|x')-\pi_{k-1}^{t-1}(a|x')\right|O_k^{t-1}(x')dx'
\end{align}
\end{small}
where $\pi_k^{t-1}(a|x')\propto \exp(\beta Q_k^{t-1}(x',a))$. 
Based on the result of Lemma \ref{lemma0}, we can bound the above difference as follows,
\begin{small}
\begin{align*}
&\int_{x'}2\beta||Q_k^{t-1}(x',\cdot)-Q_{k-1}^{t-1}(x',\cdot)||_\infty O_k^{t-1}(x')dx'\leq2\beta||Q_k^{t-1}-Q_{k-1}^{t-1}||_\infty=2\beta\Delta_{k}^{t-1}.
\end{align*}
\end{small}
The left hand side of the second inequality is
\begin{small}
\begin{align*}
&\int_x\left|\int_{x'}\sum_a \pi_{k-1}^{t-1}(a|x')P(x|x',a)\left(O_k^{t-1}(x')-O_{k-1}^{t-1}(x')\right)dx'\right|dx\\
&\leq \int_x\int_{x'}\sum_a \pi_{k-1}^{t-1}(a|x')P(x|x',a)\left|O_k^{t-1}(x')-O_{k-1}^{t-1}(x')\right|dx'dx\\
&= \int_{x'}\left|O_k^{t-1}(x')-O_{k-1}^{t-1}(x')\right|dx'= \Lambda_k^{t-1}.
\end{align*}
\end{small}
\end{proof}

\begin{lemma}\label{lemma:in1}
For a given vectors $\alpha=(\alpha_1,...,\alpha_n)$, let
\begin{align*}
f_\alpha(Q):=\sum_j\frac{ \exp(\beta q_j)\alpha_j}{\sum_{i} \exp(\beta q_i)},
\end{align*}
where $Q=(q_1,...,q_n)$. 
Then, for any two arbitrary vectors $Q$ and $\widetilde{Q}$, we have
\begin{align}
|f_\alpha(Q)-f_\alpha(\widetilde{Q})|\leq \frac{\beta||\alpha||_d}{2}||Q-\widetilde{Q}||_\infty.
\end{align}
where $||\alpha||_d:=\max_{i,j}|\alpha_i-\alpha_j|$.
\end{lemma}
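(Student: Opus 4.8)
The plan is to read $f_\alpha$ as the mean of the vector $\alpha$ under the Boltzmann distribution and then bound its $\ell_\infty$-Lipschitz constant through the gradient. Writing $\pi_j(Q):=e^{\beta q_j}/\sum_i e^{\beta q_i}$, we have $f_\alpha(Q)=\sum_j\pi_j(Q)\,\alpha_j=\mathbb{E}_{\pi(Q)}[\alpha]$, which is smooth in $Q$, and a quotient-rule computation gives
\begin{align*}
\frac{\partial f_\alpha}{\partial q_k}(Q)=\beta\,\pi_k(Q)\big(\alpha_k-f_\alpha(Q)\big).
\end{align*}
Hence $f_\alpha$ is differentiable with $\|\nabla f_\alpha(Q)\|_1=\beta\sum_k\pi_k(Q)\,|\alpha_k-\bar\alpha|$, where $\bar\alpha:=f_\alpha(Q)=\mathbb{E}_{\pi(Q)}[\alpha]$. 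Representing the difference as the path integral $f_\alpha(Q)-f_\alpha(\widetilde Q)=\int_0^1\nabla f_\alpha(Q_t)\cdot(Q-\widetilde Q)\,dt$ along the segment $Q_t:=\widetilde Q+t(Q-\widetilde Q)$, and applying H\"older's inequality (the $\ell_1$/$\ell_\infty$ duality), reduces the whole statement to the uniform pointwise bound $\sup_t\|\nabla f_\alpha(Q_t)\|_1\le\tfrac{\beta}{2}\|\alpha\|_d$.

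So the crux is a single scalar inequality, valid for every probability vector $\pi$: the mean absolute deviation obeys $\sum_k\pi_k|\alpha_k-\bar\alpha|\le\tfrac12\|\alpha\|_d$, where $\|\alpha\|_d=\max_i\alpha_i-\min_i\alpha_i$ is the range of the entries. I would establish this through the variance. By Jensen (equivalently Cauchy--Schwarz),
\begin{align*}
\sum_k\pi_k|\alpha_k-\bar\alpha|\le\Big(\sum_k\pi_k(\alpha_k-\bar\alpha)^2\Big)^{1/2}=\sqrt{\mathrm{Var}_\pi(\alpha)},
\end{align*}
and then I invoke Popoviciu's inequality: since $\bar\alpha$ minimizes $c\mapsto\sum_k\pi_k(\alpha_k-c)^2$, substituting instead the midpoint $c=\tfrac12(\min_i\alpha_i+\max_i\alpha_i)$ gives $\mathrm{Var}_\pi(\alpha)\le\sum_k\pi_k(\alpha_k-c)^2\le(\|\alpha\|_d/2)^2$, because $|\alpha_k-c|\le\|\alpha\|_d/2$ for every $k$. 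Combining the two displays yields the mean-absolute-deviation bound, hence $\|\nabla f_\alpha\|_1\le\tfrac{\beta}{2}\|\alpha\|_d$ uniformly in $Q$, which closes the argument.

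The main obstacle is exactly this deviation bound, and the difficulty is that the factor $1/2$ is tight: it is attained by the two-point uniform distribution on $\{\min_i\alpha_i,\max_i\alpha_i\}$, so any lossy detour would overshoot the required constant. In particular, routing through Lemma~\ref{lemma0} on $\|\pi(Q)-\pi(\widetilde Q)\|_1$ would introduce the extra factor $2\alpha<2$ appearing there and fail to deliver exactly $\beta\|\alpha\|_d/2$. Everything else is routine: the gradient formula is a direct differentiation, and the reduction to a uniform gradient bound is convexity of the domain plus H\"older.

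An alternative that avoids calculus is to use $\sum_j(\pi_j-\widetilde\pi_j)=0$ to write $f_\alpha(Q)-f_\alpha(\widetilde Q)=\sum_j(\pi_j-\widetilde\pi_j)(\alpha_j-c)$ for the midpoint $c$, and bound $|\alpha_j-c|\le\|\alpha\|_d/2$, reducing to $|f_\alpha(Q)-f_\alpha(\widetilde Q)|\le\tfrac12\|\alpha\|_d\,\|\pi(Q)-\pi(\widetilde Q)\|_1$. This works provided one has the \emph{sharp} softmax contraction $\|\pi(Q)-\pi(\widetilde Q)\|_1\le\beta\|Q-\widetilde Q\|_\infty$; computing the operator norm of the softmax Jacobian $\beta(\mathrm{diag}(\pi)-\pi\pi^{\top})$ from $\ell_\infty$ to $\ell_1$ again collapses to the mean absolute deviation of a $\pm1$ vector, so this route carries the same essential content and I would prefer the gradient argument above for directly producing the stated constant.
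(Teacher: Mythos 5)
Your proof is correct, and it takes a genuinely different route at the key step, although it shares the overall skeleton with the paper: both arguments reduce the lemma to the uniform gradient bound $\|\nabla f_\alpha(Q)\|_1\le\tfrac{\beta}{2}\|\alpha\|_d$ and then close with a mean-value/path-integral argument plus H\"older's $\ell_1$/$\ell_\infty$ duality. Where you diverge is in how that gradient bound is obtained. The paper writes $\|\nabla f_\alpha\|_1$ as a sign-weighted quadratic form $\exp(\beta Q)^{T}A_{b'}\exp(\beta Q)/\|\exp(\beta Q)\|_1^2$ with the antisymmetric matrix $A_{ij}=\alpha_i-\alpha_j$, maximizes over sign vectors $b'$ by a partition argument (the extremal $b'$ splits the indices into two blocks $I_1,I_2$), and finishes with the elementary inequality $2xy/(x+y)^2\le 1/2$. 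You instead recognize the cleaner identity $\partial f_\alpha/\partial q_k=\beta\pi_k(\alpha_k-\bar\alpha)$, so that $\|\nabla f_\alpha\|_1=\beta\,\mathbb{E}_\pi|\alpha-\bar\alpha|$ is a mean absolute deviation, and bound it by $\sqrt{\mathrm{Var}_\pi(\alpha)}\le\|\alpha\|_d/2$ via Cauchy--Schwarz and Popoviciu's inequality. Your route is shorter, sidesteps the paper's somewhat loosely justified combinatorial step (the claim about which sign pattern yields the ``most non-zero entries'' is not really what drives the bound there), and makes the sharpness of the constant $1/2$ transparent via the two-point extremizer; the paper's route is more self-contained elementary algebra and needs no probabilistic inequalities. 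Both deliver exactly the stated constant, and your observation that detouring through Lemma~\ref{lemma0} would lose it is accurate.
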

\begin{proof}
The norm of the gradient of $f_\alpha(\cdot)$ is given by 
\begin{align*}
||\nabla f_\alpha ||_1=\sum_j\frac{|\sum_{i} \beta(\alpha_j-\alpha_i)\exp(\beta(q_i+q_j))|}{(\sum_{i} \exp(\beta q_i))^2}.
\end{align*}
The right hand side of the above inequality can be written as
\begin{align*}
\sum_j b_j\frac{\sum_{i} \beta(\alpha_j-\alpha_i)\exp(\beta(q_i+q_j))}{(\sum_{i} \exp(\beta q_i))^2},
\end{align*}
where $b_j\in\{-1,1\}$ specifies the sign of the term within the absolute-value. Without loss of generality, we assume that $\alpha_1\leq...\leq \alpha_n$. In this case, $b_n=1$ and $b_1=-1$ but the rest of $b_j$s cannot be specified only based on $\alpha_j$s.
We define matrix $A\in\mathbb{R}^{n\times n}$, such that $A_{i,j}=(\alpha_i-\alpha_j)$, then 
\begin{small}
\begin{align}\label{eq:f2}
&||\nabla f_\alpha ||_1=\beta\frac{(b\circ\exp(\beta Q))^T A \exp(\beta Q)}{||\exp(\beta Q)||^2_1}\leq \beta\max_{b'\in\{0,1\}^n}\frac{(b'\circ\exp(\beta Q))^T A \exp(\beta Q)}{||\exp(\beta Q)||^2_1},
\end{align}
\end{small}
where $b\circ\exp(\beta Q)=(b_1\exp(\beta q_1),...,b_n\exp(\beta q_n))$. Let us define
\begin{align*}
&\varrho_{b'} := (b'\circ\exp(\beta Q))^T A \exp(\beta Q)=\exp(\beta Q)^T A_{b'} \exp(\beta Q),
\end{align*}
where $A_{b'}$ is matrix $A$ in which $i$th row is multiplied by $b'_i$.
In this case, 
$$
2\varrho_{b'}=\exp(\beta Q)^T (A_{b'}+A^T_{b'}) \exp(\beta Q).
$$
Note that $A+A^T=0$, and hence the absolute value of the $(i,j)$th entry of $A_{b'}+A^T_{b'}$ is zero iff $b_i' b_j'=1$, otherwise it is at most $2|\alpha_i-\alpha_j|$. Therefore, $A_{b'}+A^T_{b'}$ has most non-zero entries iff exactly half of $b'$ is +1 and the other half is -1. Let $I_1$ and $I_2$ denote the indices that $b'_i$ are +1 and -1, respectively and $|I_1|=\lfloor n/2\rfloor$, $|I_2|=\lceil n/2\rceil$ . 
Therefore, we obtain
\begin{small}
\begin{align}\notag
&2\varrho_{b'}\leq 2\sum_{i\in I_1}\sum_{j\in I_2}2|\alpha_i-\alpha_j|\exp(\beta(q_i+q_j))\leq 4||\alpha||_d\sum_{i\in I_1}\sum_{j\in I_2}\exp(\beta(q_i+q_j))\\ \label{eq:f3}
&= 4||\alpha||_d\left(\sum_{i\in I_1}\exp(\beta q_i)\right)\left(\sum_{j\in I_2}\exp(\beta q_j)\right).
\end{align}
\end{small}
Using the definition of $\varrho_{b'}$, Equations \eqref{eq:f2} and \eqref{eq:f3}, we have
\begin{small}
\begin{align*}
&||\nabla f_\alpha ||_1\leq \beta\max_{b'}\varrho_{b'}\leq \beta||\alpha||_d\max_{I_1, I_2}\frac{2\left(\sum_{i\in I_1}\exp(\beta q_i)\right) \left(\sum_{j\in I_2}\exp(\beta q_j)\right)}{\left(\sum_{i\in I_1}\exp(\beta q_i)+\sum_{j\in I_2}\exp(\beta q_j)\right)^2}
\leq \beta\frac{||\alpha||_d}{2}.
\end{align*}
\end{small}
The rest follows by the mean-value theorem, the fact that $f_\alpha$ is a continuous function of $Q$, and H\"older inequality.
\end{proof}

\begin{lemma}\label{lemma1}
For a given sequence $\{a_n\}_{n=1}^{\infty}$, if there exists $\alpha\in[0,1)$ such that 
\begin{align}\label{eq:sq}
|a_n-a_{n-1}|\leq\alpha|a_{n-1}-a_{n-2}|, \ \ \forall n\geq1,
\end{align}
then $a_n$ converges. 
\end{lemma}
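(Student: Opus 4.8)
The plan is to show that the sequence $\{a_n\}$ is Cauchy and then invoke completeness of $\mathbb{R}$ to conclude convergence. The hypothesis \eqref{eq:sq} says that the consecutive differences $d_n:=|a_n-a_{n-1}|$ contract by a factor of $\alpha$ at each step, so the natural first move is to iterate this bound down to the base difference $d_1$.

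First I would unroll the recursion in \eqref{eq:sq}. Applying it repeatedly gives $d_n\leq\alpha d_{n-1}\leq\alpha^2 d_{n-2}\leq\cdots\leq\alpha^{n-1}d_1$ for every $n\geq 1$. This reduces all the differences to a geometric sequence governed by $\alpha<1$. The second step is to control the distance between two arbitrary far-apart terms: for $m>n$, the triangle inequality yields
\begin{align*}
|a_m-a_n|\leq\sum_{k=n+1}^{m}d_k\leq d_1\sum_{k=n+1}^{m}\alpha^{k-1}\leq d_1\frac{\alpha^{n}}{1-\alpha}.
\end{align*}
The final bound is independent of $m$ and tends to $0$ as $n\to\infty$ because $\alpha\in[0,1)$, which is exactly the Cauchy criterion.

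Having established that $\{a_n\}$ is Cauchy, the last step is simply to note that $\mathbb{R}$ is complete, so the sequence converges to some limit. In the application to Theorem~\ref{them_conv} the relevant quantities are the iterates $Q^t_{i,k}$ living in a finite-dimensional (hence complete) normed space, so the same Cauchy argument applies verbatim with $\|\cdot\|_\infty$ in place of $|\cdot|$.

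I expect no genuine obstacle here: this is the standard geometric-decay-implies-Cauchy argument, and the only points requiring minor care are making the unrolling of \eqref{eq:sq} precise and bounding the tail of the geometric series uniformly in $m$. The one thing worth flagging is that the statement as written ranges over $n\geq 1$, which formally references $a_{-1}$; I would read the hypothesis as holding for all $n$ large enough (equivalently $n\geq 2$), which does not affect the conclusion since convergence depends only on the tail of the sequence.
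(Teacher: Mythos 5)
Your proof is correct and is exactly the standard geometric-decay-implies-Cauchy argument; the paper states Lemma~\ref{lemma1} without proof, treating it as a standard fact, and your argument (unroll to $d_n\leq\alpha^{n-1}d_1$, bound $|a_m-a_n|$ by the geometric tail $d_1\alpha^n/(1-\alpha)$, invoke completeness) is the intended one. Your remark about the index range ($n\geq1$ formally referencing undefined terms, so the hypothesis should be read for $n\geq2$ or for $n$ sufficiently large) is a valid and harmless correction.
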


\begin{lemma}\label{lemma:soft_lip}
Function $g(Q):=\mathbb{E}_{Exp(Q)}[Q]$ is Lipschitz with the constant
$S:=(1+\xi\beta)$, where $||Q||_\infty\leq \xi$, and $\beta$ is the inverse temperature.
\end{lemma}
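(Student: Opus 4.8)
The plan is to exploit the fact that $g$ depends on its argument $Q$ in two conceptually separate ways: through the Boltzmann weights $Exp(Q)$ and through the values $Q$ that are being averaged. I would introduce the auxiliary quantity
$$
G(P,V):=\sum_j \frac{e^{\beta p_j}}{\sum_i e^{\beta p_i}}\, v_j,
$$
so that $g(Q)=G(Q,Q)$, and then split the increment by a single telescoping step,
$$
g(Q)-g(\tilde Q)=\underbrace{\big(G(Q,Q)-G(\tilde Q,Q)\big)}_{\text{weights vary, values fixed}}+\underbrace{\big(G(\tilde Q,Q)-G(\tilde Q,\tilde Q)\big)}_{\text{weights fixed, values vary}}.
$$

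For the first term the averaged vector is held fixed at $V=Q$, so the map $P\mapsto G(P,Q)$ is precisely the map $f_\alpha$ of Lemma \ref{lemma:in1} with $\alpha=Q$. Hence I can bound it by $\tfrac{\beta\|Q\|_d}{2}\|Q-\tilde Q\|$, and since $\|Q\|_d=\max_{i,j}|q_i-q_j|\le 2\|Q\|\le 2\xi$ this term contributes at most $\beta\xi\|Q-\tilde Q\|$. For the second term the weights are frozen at $Exp(\tilde Q)$, so the difference is a convex combination $\sum_j Exp(\tilde Q)_j (q_j-\tilde q_j)$, whose absolute value is trivially bounded by $\|Q-\tilde Q\|$. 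Summing the two contributions yields $|g(Q)-g(\tilde Q)|\le (1+\xi\beta)\|Q-\tilde Q\|$, i.e. the claimed constant $S=1+\xi\beta$.

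The only real care is the bookkeeping: one must verify that in the first term it is genuinely the \emph{value} vector that is frozen (so that Lemma \ref{lemma:in1} applies verbatim with $\alpha=Q$), and that the factor $2$ coming from $\|Q\|_d\le 2\xi$ exactly cancels the $1/2$ in that lemma to produce $\xi\beta$ rather than $2\xi\beta$. I do not expect a genuine analytic obstacle here, since the hard estimate, the $\ell_1$ gradient bound for a softmax-weighted average with a fixed value vector, is already packaged in Lemma \ref{lemma:in1}.

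As a self-contained alternative (should one prefer not to invoke Lemma \ref{lemma:in1}), I would compute the gradient directly. Using the softmax derivative $\partial Exp(Q)_i/\partial q_j=\beta\,Exp(Q)_i(\delta_{ij}-Exp(Q)_j)$ one finds $\partial g/\partial q_j=Exp(Q)_j\big(1+\beta(q_j-g(Q))\big)$, whence
$$
\|\nabla g\|_1\le 1+\beta\sum_j Exp(Q)_j\,|q_j-g(Q)|.
$$
Since $g(Q)$ is the mean of $Q$ under $Exp(Q)$, the mean-absolute-deviation estimate gives $\sum_j Exp(Q)_j|q_j-g(Q)|\le \tfrac12(\max_j q_j-\min_j q_j)\le \xi$; combining this with the mean-value theorem and H\"older's inequality reproduces the same constant $S=1+\xi\beta$. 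I would nonetheless prefer the telescoping route above, as it reuses machinery already established in the paper.
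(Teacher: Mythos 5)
Your proof is correct and takes essentially the same route as the paper: the paper's own argument uses exactly your telescoping decomposition $g(Q)-g(\widetilde{Q})=\big(\mathbb{E}_{Exp(Q)}[Q]-\mathbb{E}_{Exp(\widetilde{Q})}[Q]\big)+\big(\mathbb{E}_{Exp(\widetilde{Q})}[Q]-\mathbb{E}_{Exp(\widetilde{Q})}[\widetilde{Q}]\big)$, bounds the first term via Lemma \ref{lemma:in1} combined with $\max_{i,j}|\alpha_i-\alpha_j|\leq 2||\alpha||_\infty$, and bounds the second term trivially by $||Q-\widetilde{Q}||_\infty$. The bookkeeping you flag (the factor $2$ cancelling the $1/2$ in Lemma \ref{lemma:in1}) works out exactly as you describe.
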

\begin{proof}
This is a consequence of Lemma \ref{lemma:in1} and the fact that $\max_{i,j}|\alpha_i-\alpha_j|\leq 2||\alpha||_\infty$. Note that 
\begin{small}
\begin{align*}
&|g(Q)-g(\widetilde{Q})|\leq \Big|\mathbb{E}_{Exp(Q)}[Q]-\mathbb{E}_{Exp(\widetilde{Q})}[Q]\Big|+\Big|\mathbb{E}_{Exp(\widetilde{Q})}[Q]-\mathbb{E}_{Exp(\tilde{Q})}[\widetilde{Q}]\Big|\\
&\leq (\beta||Q||_\infty + 1)||Q-\widetilde{Q}||_\infty,
\end{align*}
\end{small}
where $||Q||_\infty:=\sup_{x}\sup_{a} |Q(x,a)|$.
\end{proof}

\subsection{Proof of Theorem \ref{thm:G}}\label{thm:G:p}
Differentiating the Lagrangian of the problem in \extref{irl_max},
\begin{align*}
&\Gamma= H(\vec{a}||\vec{x})+\sum_{i,k}\sum_\tau\theta_{k,i}\left(\mathbb{E}[F_{i,k}(\vec{x}(\tau),a_i(\tau))]-\tilde{\mathbb{E}}[F_{i,k}(\vec{x}(\tau),a_i(\tau))]\right)+\\
&\sum_{i,\tau}\sum C^{(i)}_{a_i(\tau),\vec{a}^{\tau-1},\vec{x}^\tau}\ \pi(a_i(\tau)|\vec{a}^{\tau-1},\vec{x}^\tau)\!+\!\sum_{i,\tau}\sum D^{(i)}_{\vec{a}^{\tau-1},\vec{x}^\tau}\left(\!\sum_{a_i(\tau)} \pi(a_i(\tau)|\vec{a}^{\tau-1},\vec{x}^\tau)\!-\!1\!\right),
\end{align*}
implies
\begin{align*}
&\frac{\partial\Gamma}{\partial \pi(a_i(t)|\vec{a}^{t-1},\vec{x}^t)}=C^{(i)}_{a_i(t),\vec{a}^{t-1},\vec{x}^t}+D^{(i)}_{\vec{a}^{t-1},\vec{x}^t}-\sum_{\vec{a}_{-i}(t)}\frac{ P(\vec{a}^{t},\vec{x}^{t})}{\pi(a_i(t)|\vec{a}^{t-1},\vec{x}^t)}-\\
&\sum_{\tau\geq t}\sum \frac{\partial P(\vec{a}^{\tau},\vec{x}^{\tau})}{\partial\pi(a_i(t)|\vec{a}^{t-1},\vec{x}^t)}\log \pi(\vec{a}(\tau)|\vec{a}^{\tau-1},\vec{x}^\tau)+\sum_{j,k}\theta_{k,j}\sum_{\tau\geq t}\sum\frac{\partial P(\vec{a}^{\tau},\vec{x}^{\tau})}{\partial\pi(a_i(t)|\vec{a}^{t-1},\vec{x}^t)}F_{j,k}(\vec{x}(\tau),a_j(\tau)),
\end{align*}
where for $\tau\geq t$,
\begin{align*}
\frac{\partial P(\vec{a}^{\tau},\vec{x}^{\tau})}{\partial\pi(a_i(t)|\vec{a}^{t-1},\vec{x}^t)}=\frac{ \pi(\vec{a}(t)|\vec{a}^{t-1},\vec{x}^{t})}{\pi(a_i(t)|\vec{a}^{t-1},\vec{x}^t)}P(\vec{a}^{t-1},\vec{x}^{t})P(\vec{a}_{t+1}^\tau,\vec{x}_{t+1}^\tau|\vec{a}^{t},\vec{x}^t).
\end{align*}
For simplicity, we denote $\pi(a_j(\tau)|\vec{a}^{\tau-1},\vec{x}^\tau)$ and $\pi(\vec{a}(\tau)|\vec{a}^{\tau-1},\vec{x}^\tau)$ by $\pi_j(\tau)$ and $\pi(\tau)$, respectively.
By equating the derivative of the Lagrangian to zero, we obtain
\begin{align*}
&C^{(i)}_{a_i(t),\vec{a}^{t-1},\vec{x}^t}+D^{(i)}_{\vec{a}^{t-1},\vec{x}^t}=P(\vec{a}^{t-1},\vec{x}^{t})\Bigg(1+\sum_{\vec{a}_{-i}(t)}\pi_{-i}(t)\log \pi_{-i}(t)+\log \pi_{i}(t)+\\
&\sum_{\tau> t}\sum_{\vec{a}_{-i}(t)} \pi_{-i}(t)\ \mathbb{E}\ [\log \pi(\tau)\big| \ \vec{a}^{t},\vec{x}^t]-\sum_{j,k}\theta_{k,j}\sum_{\tau\geq t}\sum_{\vec{a}_{-i}(t)} \pi_{-i}(t)\ \mathbb{E}\ [F_{j,k}(\vec{x}(\tau),a_j(\tau)) \ \big| \ \vec{a}^{t},\vec{x}^t]\Bigg),
\end{align*}
where $\pi_{-i}(t)=\prod_{j\in-i}\pi_j(t)$. 
The above equation suggests the following form for agent's $i$ policy at time $T$,
\begin{align*}
\log\pi_i(T)=&\exp\Big(\theta_i^TF_i(\vec{x}(T),a_i(T))-\softmax_{a'}\theta_i^TF_i(\vec{x}(T),a')\Big)\ \ \forall\ i,\\
\log\pi_{i}(T-1)\propto&\exp\Big(\theta_i^TF_i(\vec{x}(T-1),a_i(T-1))+\!\!\!\!\!\!\sum_{\vec{a}_{-i}(T-1)}\!\!\!\!\!\! \pi_{-i}(T-1)\\
&P(\vec{x}_{-i}(T)|\vec{x}_{-i}(T-1),\vec{a}_{-i}(T-1))\big[\softmax_{a'}\theta_i^Tf_i(\vec{x}(T),a'_i(T))\big]\Big)\ \ \forall\ i,\\
\hspace{.3cm}\vdots&
\end{align*}
where $\theta_i=[\theta_{1,i},...,\theta_{K,i}]$ and $F_i=[F_{i,1},...,F_{i,K}]^T$.

\subsection{Proof of Theorem \ref{thm:gra}}

We have 
\begin{align}\label{gg1}\notag
&H(\vec{a}||\vec{x})=\sum_{t\leq T}\mathbb{E}_{\vec{x}^{t},\vec{a}^t}[-\log \pi^t(a(t)|\vec{x}(t))]=-\sum_{t\leq T}\sum_i\mathbb{E}_{\vec{x}^{t},\vec{a}^t}[\theta_i^TF_i(\vec{x}(t),a_i(t))]\\\notag
&-\sum_{t=1}^{T-1}\sum_i\mathbb{E}_{\vec{x}^{t+1},\vec{a}^{t}}[\softmax W^{t+1}_i(H^{t+1},a)]+\sum_{t=1}^T\sum_i\mathbb{E}_{\vec{x}^{t},\vec{a}^{t-1}}[\softmax W^t_i(\vec{x}(t),a)]\\
&=-\sum_{t\leq T}\sum_i\mathbb{E}_{\vec{x}^{t},\vec{a}^t}[\theta_i^TF_i(\vec{x}(t),a_i(t))]+\sum_i\mathbb{E}_{\vec{x}^{1}}[\softmax W^1_i(H^1,a)].
\end{align}
On the other hand, we have
\begin{align*}
\mathbb{E}_{\vec{x}(t)}[\nabla_{\theta_j}\softmax W^t_i(\vec{x}(t),a)]=\mathbb{E}[\sum_a\pi_i^t(a|\vec{x}(t))\nabla_{\theta_j}W^t_i(\vec{x}(t),a) ]=\mathbb{E}_{\vec{x}(t)\cup a_i(t)}[\nabla_{\theta_j}W^t_i(\vec{x}(t),a) ].
\end{align*}
Using the above equality, for any deterministic function $\Phi$, we have
\begin{align}\label{goo}
&\mathbb{E}_{\vec{x}(t)}\Big[\sum_a \Phi(\vec{x}(t)) \nabla_{\theta_j}\pi_{k}(a|\vec{x}(t))\Big]=\\
&\mathbb{E}_{\vec{x}(t)}\Big[\sum_a \Phi(\vec{x}(t)) \pi_{k}(a|\vec{x}(t))\big(\nabla_{\theta_j}W_k^t(\vec{x}(t),a)-\nabla_{\theta_j}\softmax_b W_k^t(\vec{x}(t),b)\big)\Big]=0.
\end{align}
Taking the derivative of the last term of (\ref{gg1}) implies 
\begin{align*}
&\nabla_{\theta_j} \mathbb{E}_{\vec{x}^{1}}[\softmax W^1_j(H^1,a)]=\mathbb{E}_{\vec{x}^1,a}[F_j(\vec{x}(1),a)]+\mathbb{E}_{\vec{x}^2,\vec{a}^2}[\nabla_{\theta_j}W_j^2(H^2,a_j(2))]+\\
&\mathbb{E}_{\vec{x}^1,a}\Big[\pi_{-j}(\vec{a}_{-j}(1)|\vec{x}(t))\softmax_c W_j^2(H^2,c)\sum_{k\in-j}\nabla_{\theta_j}\big(W_k^1(H^1,a_k(1))-\softmax_dW_k^1(H^1,d)\big)\Big].
\end{align*}
Due to (\ref{goo}), the above last term is  zero, and by pushing the recursion to the final step, we obtain
\begin{align*}
&\nabla_{\theta_j} \mathbb{E}_{\vec{x}^{1}}[\softmax W^1_j(H^1,a)]=\sum_{t\leq T}\mathbb{E}_{\vec{x}^t,\vec{a}^t}[F_j(\vec{x}(t),a_j(t))].
\end{align*}
Similarly, when $i\neq j$, one can obtain 
\begin{align*}
&\nabla_{\theta_j} \mathbb{E}_{\vec{x}^{1}}[\softmax W^1_i(H^1,a)]=\mathbb{E}_{\vec{x}^2,\vec{a}^2}[\nabla_{\theta_j}W^2_i(H^2,a_j(2))]\\
&=\cdots=\mathbb{E}_{\vec{x}^T,\vec{a}^T}[\nabla_{\theta_j}W^T_i(\vec{x}(t),a_j(T))]=0.
\end{align*}
Going back to the derivative of the dual function, we will have two terms: $H(\vec{a}||\vec{x})$ and $\Theta(\mathbb{E}[F]-\tilde{\mathbb{E}}[F])$. Due to the above calculation, the derivative of the first term is zero, with respect to $\theta_i$ and the derivative of the second term concludes the result.

\bibliography{ref}
\bibliographystyle{abbrvnat}

\end{document}